\def \arxiv{0}
\newtheorem{definition}{Definition} 
\newtheorem{theorem}{Theorem}
\newtheorem{lemma}{Lemma}
\theoremstyle{remark}
\newtheorem*{remark}{Remark}
\begin{document}
%
\title{An Efficiency-boosting Client Selection Scheme for Federated Learning with Fairness Guarantee }
%
%
%

 \author{Tiansheng~Huang,~
          Weiwei~Lin,~
          Wentai~Wu,~
          Ligang~He,~
          Keqin~Li,~\IEEEmembership{Fellow,~IEEE},
          and ~Albert Y. Zomaya,~ \IEEEmembership{Fellow,~IEEE}
 \IEEEcompsocitemizethanks{ \IEEEcompsocthanksitem T. Huang and W. Lin (corresponding author) are with the School of Computer Science and Engineering, South China University of Technology, China. Email: cs\_tianshenghuang@mail.scut.edu.cn, linww@scut.edu.cn.}
 \IEEEcompsocitemizethanks{\IEEEcompsocthanksitem  W. Wu, L. He are with the Department of Computer Science, the University of Warwick. Email: wentai.wu, Ligang.He@warwick.ac.uk.}
\IEEEcompsocitemizethanks{\IEEEcompsocthanksitem K. Li is with the Department of Computer Science, State University of New York, New Paltz, NY 12561 USA. E-mail: lik@newpaltz.edu.
}
\IEEEcompsocitemizethanks{\IEEEcompsocthanksitem A. Y. Zomaya is with the School of Computer Science, The University of Sydney, Sydney, Australia. Email: albert.zomaya@sydney.edu.au.
}
 }

%
%

\markboth{THIS PAPER HAS BEEN ACCEPTED BY THE IEEE TRANSACTIONS ON PARALLEL AND DISTRIBUTED SYSTEMS (TPDS).  DOI 10.1109/TPDS.2020.3040887}%
{Huang \MakeLowercase{\textit{et al.}}: An Efficiency-boosting Client Selection Scheme for Federated Learning with Fairness Guarantee}
%


\IEEEtitleabstractindextext{%
\begin{abstract}
The issue of potential privacy leakage during centralized AI's model training has drawn intensive concern from the public. A Parallel and Distributed Computing (or PDC) scheme, termed Federated Learning (FL), has emerged as a new paradigm to cope with the privacy issue by allowing clients to perform model training locally, without the necessity to upload their personal sensitive data. In FL, the number of clients could be sufficiently large, but the bandwidth available for model distribution and re-upload is quite limited, making it sensible to only involve part of the volunteers to participate in the training process. The client selection policy is critical to an FL process in terms of training efficiency, the final model's quality as well as fairness. In this paper, we will model the fairness guaranteed client selection as a Lyapunov optimization problem and then a $\rm \mathbf {C^2MAB}$-based method is proposed for estimation of the model exchange time between each client and the server, based on which we design a fairness guaranteed algorithm termed RBCS-F for problem-solving. The regret of RBCS-F is strictly bounded by a finite constant, justifying its theoretical feasibility. Barring the theoretical results, more empirical data can be derived from our real training experiments on public datasets.
\end{abstract}
\begin{IEEEkeywords}
Client selection, Contextual combinatorial multi-arm bandit, Fairness scheduling, Federated learning, Lyapunov optimization.
\end{IEEEkeywords}
}

\maketitle


%
\IEEEpeerreviewmaketitle
\section{Introduction}
\subsection{Background}
\IEEEPARstart{F}{ederated} Learning (FL) has been esteemed as one of the most promising solutions to the crisis known as isolated "data island". It helps break down the obstacles between parties or entities, allowing a greater extent of data sharing. All the entities being involved could benefit from such a new paradigm, in which model owners could build a more robust and comprehensive model with more data being accessible. Meanwhile, data owners might either receive substantial rewards or services that match their interests in return. More importantly, the privacy of the data owners would not risk being intruded since their raw data simply does not necessarily need to leave the local devices, as all the training is only performed locally.
\subsection{Motivations}
Within such a novel paradigm, new challenges co-exist with opportunities. Unlike the traditional model training process, not all the data within the system could be accessed over every round of training. Owing to the limited bandwidth and the dynamic status of the training clients, only a fraction of them could be picked to perform training on behalf of the model owner. From the perspective of a model owner, the selection decision in each round could have a profound impact on the model's training time, convergence speed, training stability, as well as the final achieved accuracy. Some studies in the literature have made iconic contributions to this problem. To illustrate, in \cite{nishio2019client}, when making a selection, Nishio et al. concentrate on the evaluation of communication time, which accounts for a considerable portion of time for a training round. In another study\cite{zeng_energy-efficient_2019}, the authors consider more. They further take the energy consumption factor into consideration. Barring an intelligent decision on participant selection, an efficient bandwidth allocation scheme was also given by them. However, the current line of research evades two important factors. For one thing, both of them assume a pre-known local training time to the scheduler, which may not be realistic in all circumstances. For another, indicated by Theorem 2 in \cite{zeng_energy-efficient_2019}, devices with higher performance are more favored by their proposed methods. Indeed, always selecting the “fast” devices somehow boost the training process. But clients with low priority are simply being deprived of chances to participate at the same time, which we refer to it as an unfair selection among clients. In fact, such an extreme selection scheme might bring undesirable side effects by neutralizing some portions of data. Conceivably, with a smaller amount of data involved, data diversity can not be guaranteed, thereby hurting the performance of model training to some extent. This motivates us to develop an algorithm that strikes a good balance between training efficiency and fairness. Also, the algorithm is supposed to be intelligent enough to predict the training time of the clients based on their reputation (or their historical performance), rather than assuming it to be known a priori.
\subsection{Contributions}
The main contributions of this paper are listed as follows:
{
\begin{enumerate}
\item We investigate the client selection in FL from the perspective of minimizing average model exchange time when subjecting to a relatively flexible long-term fairness guarantee, as well as a few rigid system constraints. At the same time, more factors, involving the clients' availability, unknown and stochastic training time, as well as the dynamic communication status, are taken into account.
\item Inspired by \cite{li2019combinatorial}, we transform the original offline problem into an online Lyapunov optimization problem where the long-term guarantee of client participating rate is quantified using dynamic queues.
\item We build a Contextual Combinatorial Multi Arm Bandit ($\rm C^2MAB$) model for estimation of the model exchange time of each client based on their contextual properties and historical performance (or their reputation).
\item A fairness guaranteed selection algorithm RBCS-F is proposed for efficiently resolving the proposed optimization problem in FL. Theoretical evaluation and real data-based experiments show that RBCS-F can ensure no violation in the long-term fairness constraint. Besides, the training efficiency has been significantly enhanced, while the final model accuracy remains close, in a comparison with random, i.e., the vanilla client selection scheme of FL.
\end{enumerate}
}
To the best knowledge of the authors, this is the first trackable practice that combines Lyapunov optimization and $\rm C^2MAB$ for a long-term constrained online scheduling problem. Also, we shall remind the readers that the proposed combination does not confine to the application of our current proposed problem, but it has the potential to extend to a wider range of selection problems. (e.g. worker selection in crowdsensing, channel selection in the wireless network, etc.)
\par
\section{Related Works}
In recent years, we are experiencing a great surge of Edge Intelligence (see in \cite{deng_edge_2019,wang_-edge_2018,zhou_edge_2019}). Numerous attempts have been made to combine AI techniques and edge, tapping the profound potential of the ubiquitous deployed edge devices. Among these, one of the most iconic studies could be neurosurgeon \cite{kang2017neurosurgeon}. Its basic idea is to partition an intact DNN (Deep Neural Networks) into several smaller parts and disseminate them to the edge devices. Owing to a low latency between edge and users, inference speed could be significantly improved.
\par Besides, edge coordinated Federated Learning is another promising combination. Federated Learning\cite{mcmahan2016communication}, which allows data to be trained in local rather than being transmitted to the cloud, is now known as a more secure paradigm for AI's model training. We have witnessed the surge of some plausible applications of FL within these years (e.g. keyboard and emoji prediction in \cite{hard2018federated,ramaswamy2019federated}, visual object detection in \cite{liu2020fedvision}, etc). Despite the potential advantages as well as the promising applications of FL, the communication overhead between cloud and users renders as a bottleneck for it. A lengthy communication round during training might significantly degrade FL's training performance. Although more advance training schemes, such as federated distillation (FD, originally proposed in \cite{jeong2018communication}), promise us a more desirable, reduced size information exchange between users and model aggregator, the latency between cloud and edge alone is inevitable. Such a defect could be better addressed by making edge the model's aggregator or at least an intermediate one (see in \cite{liu2019edge}). In this way, the data don't have to bear an outstanding communication length to the cloud. Another open problem of FL we would like to mention here is the client selection problem, originally proposed in \cite{nishio2019client} and followed by some related works (e.g. \cite{zeng_energy-efficient_2019,yoshida2019hybrid,ye2020federated,yang2019scheduling}). Many of them see the problem from a communication perspective, focusing on building an efficient selection or bandwidth allocation scheme that helps shorten the communication length. In this paper, we will see the problem from a different angle, namely, to investigate how the fairness factor affects the training performance. We couldn't check out any specialized studies on this topic yet and we hope our research could bring some new insights in the field. Last but not least, we also want to note, FL itself is now far from its maturity, many important issues worth our study. Some of which might involve asynchronous or semi-asynchronous aggregation protocol \cite{xie2019asynchronous, wu2019safa}, incentive mechanism \cite{kang2019incentive, khan2019federated} and security issues\cite{bagdasaryan2018backdoor}, etc. We look forward to more insightful and dedicated research into FL.
\par
Now we would like to talk more about a classical problem, termed multi-arm bandit (MAB). In a classical MAB setting, arms are characterized by different unknown reward distribution. In each round of play, the player selects one of the arms from the possible options and gains a reward sampling from the selected arm's reward distribution. As there exists a tradeoff between exploration and exploitation for the player, how to maximize her obtained reward is the main concern. Several solutions, such as the well-known Upper Confidence Bound (UCB), Thompson Sampling (TS) could be applied to the problems. In addition, MAB has several variations. Those include combinatorial MAB, where players are allowed to select more than one arms in every round, contextual MAB \cite{li2010contextual,abbasi2011improved}, where the reward of an arm follows a linear stochastic formulation, and a much newer one, contextual combinatorial MAB ($\rm C^2MAB$) \cite{qin2014contextual,li2016contextual}, which is the combination of the above two. We found that $\rm C^2MAB$ could be well applied to the client selection problem in FL, as each client could be regarded as an arm and our task for each round is to choose a combination of which for participation, thus, in this paper, such a prototype will be used for our model establishment.
\section{Preliminary Introduction on FL}
\begin{figure}[!hbtp]
\centering
\includegraphics[width=3.5in]{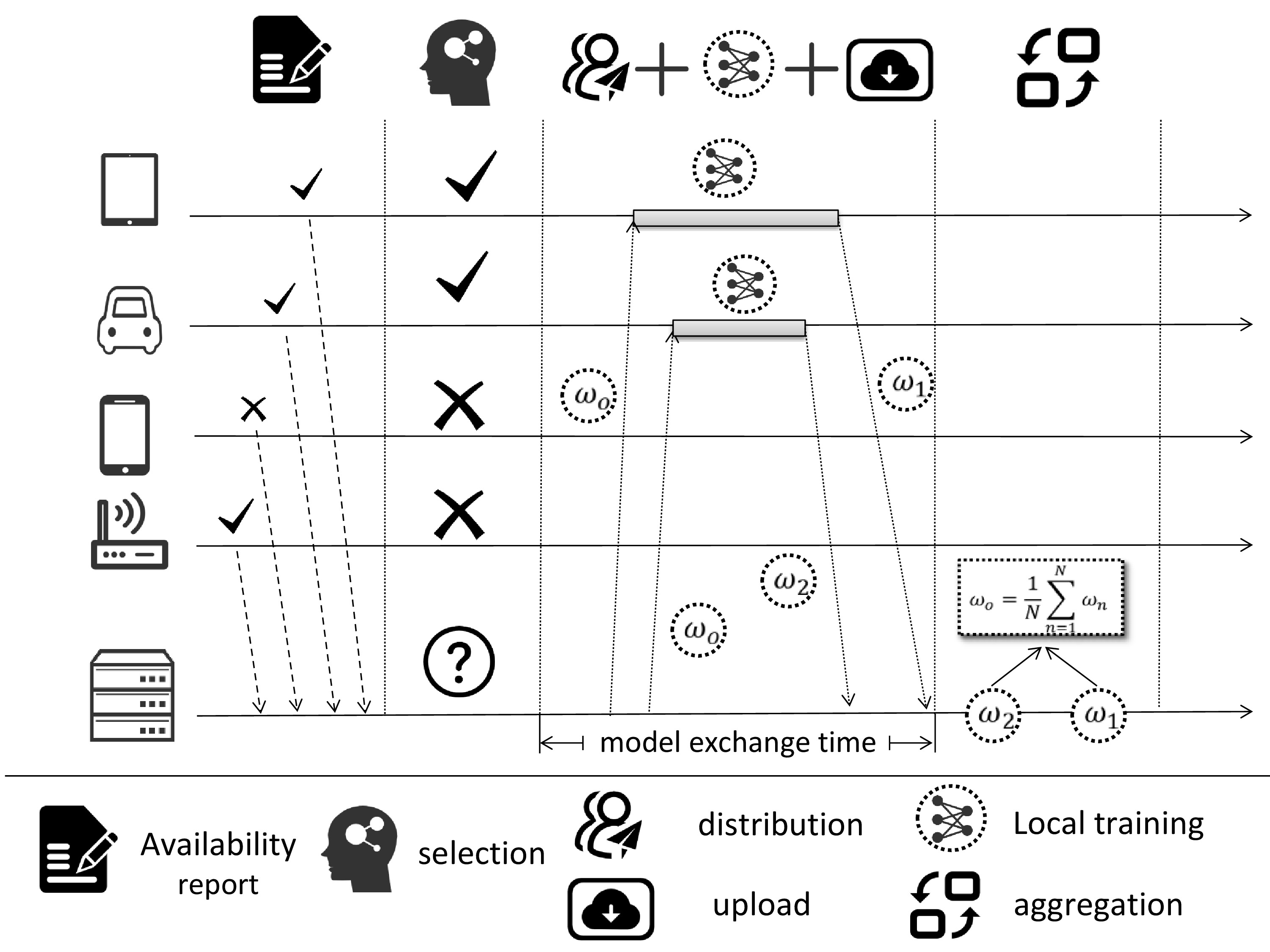}
\caption{Illustration of FL}
\label{system FL}
\end{figure}
In this paper, we consider an edge-coordinated federated learning system, in which edge is functioning as a model aggregator, and the clients (mostly mobile devices) are responsible for doing local training over their private data on behalf of the model's owner. We adopt in our system the most-accepted synchronous scheme for federated learning, which is characterized by training in iterations. For clearness, now we will explicitly explain the workflow of our synchronized scheme by giving four sequential stages of training, as follows:
\begin{enumerate}
\item At the very beginning of a new iteration, the clients first report their willingness to participate in the training as well as a few client-side information, which will be used for the client selection in the next stage.
\item In the second step, the scheduler conducts client selection to choose a portion of participants among the volunteers in light of the provided information.
\item Global model is distributed to the selected clients. After receiving the model, the clients conduct local training using their private data and update their local model. Once the training of all the selected clients is finished, the local model will be returned to the MEC server. The time span of this round is known as \textit{model exchange time}.
\item The collected local models are aggregated by the server, substituting the original global model that once being distributed, and then it proceeds to step 1) to start a new iteration.
\end{enumerate}
For a more vivid presentation of the training process, we refer the readers to Fig. \ref{system FL}.
\section{Problem Formulation}
Our main concern focuses on the selection phase, in which the server makes a decision on the involved clients. Before our formal introduction of the selection problem, we first derive a high-level description of the content of this section. In the first sub-section, we formulate the client selection problem into an offline problem with a long-term fairness constraint. The formulated problem is simple in form but indeed unsolvable due to the time coupling effect as well as the unknown model exchange time persisting in the objective. To resolve the time coupling effect, we transform the problem into an online mode using Lyapunov optimization technique, the online transformation of which gains us a fighting chance to derive an estimated model exchange time before each round scheduling, which might help resolve another obstacle (i.e. the unknown parameter in the objective function). Specifically, targeting the transformed online problem, a $\rm {C^2MAB}$ setting could come in handy for online learning of the exchange time, and being enlightened by which, we are able to further transform the problem into the ultimate form, which concludes the whole section. \par
Then we need to explain some key notations that are consistently used throughout the paper, among which, a set $ \mathcal{T} \triangleq \{1,2,\dots\}$, indexed by $t$, is used to capture the \textit{federated rounds} (namely, the iterations in FL's model update process). The set $\mathcal{N}\triangleq \{1,2,\dots N\}$ captures all the clients (each indexed by $n$) in the system. Besides, we assume that the maximum number of selected clients each round is fixed in advance to $m$. Another important notation is $\mathcal{S}_t$, which we use to capture the selected clients in round $t$ and it serves as the representation of the selection policy that we aim to optimize.  \par

\subsection{Basic Assumption on System Model}
\subsubsection{Model Exchange Time}
In a client selection problem, an important metric we shall evaluate is the long-term average model exchange time. We refer to the model exchange time as the time span between the instant the scheduler made the selection decision and that when all the re-upload models have been gathered. This model exchange time might involve time for model distribution, model training and model upload. Intuitively, a client selection scheme that is able to achieve a shorter span of each federated round is of interest, since a shorter period of each round explicitly marks shorter time for fix rounds of training. Recall that the server could step into the next phase (model aggregation) only after all the models have been gathered when adopting a synchronous federated training protocol. The time for model exchange is explicitly determined by the participated clients, or more precisely, by the one among them who spends the most time in training and model uploading. Mathematically, we have the following equation to capture the time span for a federated round:
\begin{equation}
\label{target}
f(\mathcal{S}_t,{\boldsymbol{\tau}_t})= \max_{n \in \mathcal{S}_t}\{ \tau_{t,n}  \}  
\end{equation}
where we use a set $\mathcal{S}_t$ to capture the selected clients in round $t$. Besides, $\tau_{t,n}$ is used to represent the time span between the very beginning of model distribution and the instant when the model from client $n$ being gathered. Here $\boldsymbol{\tau}_t\triangleq\{ \tau_{t,n}\}_{n\in\mathcal{N}}$ in round $t$ is unknown to the scheduler until the end of this round.
\subsubsection{Long-Term Fairness Constraint}
Another metric that might have a significant impact on FL's performance is fairness. Assume an ideal case that the server is fully aware of the exact model exchange time of each client for the incoming federated round. Then is it incontrovertibly optimized when always choosing the $m$-fastest clients, making the time span for each round of training minimized? We must note, however, that the answer may not be such apparent. We acknowledge that the time span of each round could be somehow minimized by adopting such a greedy selection scheme, but we must argue that if we always choose the fastest clients, small chance could become available for their slower counterparts, implicitly implying that little contribution could be obtained from the slowers' local data. Very likely, along with the selection bias, the global model would suffer a degradation on its capability to generalize. In this regard, a greedy selection may not trivially be the best scheme, and fairness in selection is another factor that we need to take into account. To model such a critical fairness concern, we introduce a long-term fairness constraint, as follows:
\begin{equation}
\label{fairness}
 \lim _{T \rightarrow \infty} \frac{1}{T} \sum_{t=1}^{T} \mathbb{E}[x_{t,n}] \geq \beta \quad \forall n \in \mathcal{N}
 \end{equation}
where $\beta$ models the expected guaranteed chosen rate of clients. $x_{t,n}$ is used to indicate whether client $n$ is involved in the federated round $t$ or not. In other words, $x_{t,n}=1$ for $n \in \mathcal{S}_t$; otherwise, $x_{t,n}=0$. The constraint is set to make sure the long-term average chosen rate of every client at least greater than $\beta$, which somehow helps maintain some degrees of fairness for the system.
\subsubsection{Availability of Clients }
As we are investigating a client selection problem under a highly dynamic real-world system, it is unrealistic to assume clients are always ready to provide training services. In fact, clients are free to join and leave the loose “federation” at any time they want. With this consideration, we use an indicator function $I_{t,n}$ to capture the status of a client, indicating whether the client is willing to engage or not. Such information could be given by the availability report from the clients before scheduling. Formally, we introduce a strict constraint to prevent futile participation:
\begin{equation}
\label{availability}
I_{t,n}  = 1  \quad \forall n \in \mathcal{S}_t
 \end{equation}
 \subsubsection{Selection Fraction}
Recall that the maximum number of clients that could be selected is fixed to $m$ in our setting. However, as the number of volunteers may not be able to reach $m$ if the activated number is smaller than $m$, we have to use a “$\min$” function to constraint the selection fraction, as follows:
 \begin{equation}
 \label{pick number}
|\mathcal{S}_t | =\min \left\{ m, \sum_{n\in \mathcal{N}} I_{t,n} \right\}
 \end{equation}
where $|\mathcal{S}_t |$ means the number of elements in $\mathcal{S}_t $.
Intuitively, in the case when the total number of availability could not overtake the maximum selection fraction, we simply involve all the active clients for the incoming round of training.
\subsection{An Offline Long-Term Optimization Problem}
Based on the above discussion, we are ready to introduce our client selection problem, as follows:
\begin{equation}
\begin{aligned}\textit{(P1)}:  \min_{\{\mathcal{S}_1,\mathcal{S}_2, \dots, \mathcal{S}_\infty\}} \quad & \lim _{T \rightarrow \infty} \frac{1}{T} \sum_{t=1}^{T}  \mathbb{E}[f(\mathcal{S}_t,\boldsymbol{\tau}_t)] \\ \text { s.t. } & ( \ref{fairness}), ( \ref{availability}),( \ref{pick number}) \end{aligned}
\end{equation}
where $\mathcal{S}_t$ captures the selected clients in each round, which is our optimized target. Intuitively, our aim is to minimize the long-term model exchange time while subjecting to a “soft” long-term fairness constraint (\ref{fairness}), which tolerates short-term violation, as well as two extra “hard” constraints (\ref{availability}), (\ref{pick number}), which bear no compromise.
\par
One could notice that \textit{P1} is a time-coupling scheduling problem, regarding the long-term objective and the fairness constraint in (\ref{fairness}). But we note here that such an optimization problem is challenging or even impossible to be solved offline. There are mainly three concerns about this. Firstly, random events, such as clients' availability, are not known to the scheduler until the very beginning of a particular round. This implies that an offline strategy, which is not given access to this particular information, can hardly guarantee the qualifications of constraints (\ref{availability}) and (\ref{pick number}). Our second concern is derived from the time-coupling constraint (\ref{fairness}), which is quite difficult for the offline solution to deal with. The final concern is that the information on model exchange time can only be observed after actually involving the clients in training. Nevertheless, the scheduler is supposed to make a scheduling decision before the real training process, when the actual model exchange time is unachievable. The lack of this crucial information precludes any feasible attempts to achieve an optimal offline solution. Therefore, for an alternative sub-optimal problem-solving, in the following section, we will elaborate on our transformation of the offline problem to a step-by-step online scheduling problem by Lyapunov optimization to cope with the first two proposed concern. Later, we will display our estimation of model exchange time based on clients' reputation, by which we leverage to deal with our third concern.
\subsection{Problem Transformation under Lyapunov Framework }
In this sub-section, we first take advantage of Lyapunov optimization framework to transform the offline problem \textit{P1} to an online one. \par
First, we introduce a virtual queue for each client, whose backlog$\footnote{We use the term “backlogs” and “queue length” interchangeably throughout the paper but actually, they share the same meaning.}$ is denoted by $Z_{t,n}$$\footnote{The subscripts $t$ and $n$ here correspond to a federated round and a client, respectively. A similar form of subscript definition will be adopted throughout the paper. }$, to transform the long-term fairness constraint. Specifically, $Z_{t,n}$ evolves across the FL process complying the following rule:
 \begin{equation}
 \label{Z queue}
Z_{t+1,n}=\left[Z_{t,n}+\beta- x_{t,n}\right]^+
\end{equation}
where $\beta$ is the expected guaranteed selection rate in (\ref{fairness}) and $[\dots]^+$ is equivalent to $\max(\dots,0)$. 
\par
Now we present Theorem \ref{fairness queue theorem} to justify the rationale for this transformation.
\begin{theorem}
\label{fairness queue theorem}
Long-term time average constraint (\ref{fairness}) holds  if all the virtual queues (whose backlogs denoted by $Z_{t,n}$) remain  \textit{mean rate stable} across the FL process.
\end{theorem}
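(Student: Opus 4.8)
The plan is to exploit the one-sided inequality built into the projection operator $[\cdot]^+$ and then telescope the queue dynamics. The starting observation is that, for any real $y$, we have $[y]^+ = \max(y,0) \geq y$. Applying this to the update rule (\ref{Z queue}) immediately yields the key inequality
\begin{equation}
Z_{t+1,n} \geq Z_{t,n} + \beta - x_{t,n},\nonumber
\end{equation}
which, after rearranging, bounds the per-round selection indicator from below by $x_{t,n} \geq \beta + Z_{t,n} - Z_{t+1,n}$. The point of discarding the $\max$ with zero is that it converts the nonlinear queue recursion into a clean linear bound that survives summation.

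Next I would sum this inequality over a finite horizon $t = 1, \dots, T$. The $Z$-terms on the right-hand side telescope, leaving $\sum_{t=1}^{T} x_{t,n} \geq T\beta + Z_{1,n} - Z_{T+1,n}$. Taking expectations of both sides and dividing by $T$ then gives
\begin{equation}
\frac{1}{T}\sum_{t=1}^{T}\mathbb{E}[x_{t,n}] \geq \beta + \frac{\mathbb{E}[Z_{1,n}]}{T} - \frac{\mathbb{E}[Z_{T+1,n}]}{T}.\nonumber
\end{equation}
Under the standing assumption that the queues are initialized at a finite (or zero) backlog, the term $\mathbb{E}[Z_{1,n}]/T$ vanishes as $T$ grows, so only the residual $-\mathbb{E}[Z_{T+1,n}]/T$ obstructs the desired bound.

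The final step is to let $T \to \infty$, and this is exactly where the \emph{mean rate stability} hypothesis is indispensable: by definition it forces $\mathbb{E}[Z_{T+1,n}]/T \to 0$ (noting $Z_{t,n}\geq 0$ so $|Z_{t,n}| = Z_{t,n}$), so the negative residual term disappears in the limit and what remains is precisely the fairness constraint (\ref{fairness}). I expect the only genuine subtlety to be careful bookkeeping of the limit — one must take a $\liminf$ over $T$ and argue the inequality is preserved — but because mean rate stability drives the residual to zero rather than merely bounding it, the inequality passes cleanly through the limit without further assumptions. The main conceptual obstacle is thus not the algebra but recognizing that the ``soft'' time-average constraint is enforced entirely through the asymptotic negligibility of the queue backlog relative to the horizon length.
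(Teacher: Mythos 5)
Your proof is correct and is in substance the same argument as the paper's: the paper simply invokes Theorem 2.5 of Neely's monograph as a black box to conclude that mean rate stability forces the time-average arrival rate $\beta$ not to exceed the time-average service rate $\mathbb{E}[x_{t,n}]$, and your derivation via $[y]^+\geq y$, telescoping, and dividing by $T$ is precisely the standard proof of that cited result specialized to the virtual queue (\ref{Z queue}). The one point worth keeping explicit, which you already note, is that the limit should be read as a $\liminf$ (consistent with how (\ref{fairness}) is interpreted), and that $\mathbb{E}[Z_{T+1,n}]/T\to 0$ follows from the stated definition of mean rate stability since $(T+1)/T\to 1$.
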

\begin{proof}
According to the queue theory (see in Theorem 2.5, \cite{neely2010stochastic}), if all the virtual queues  $Z_{t,n}$ remain \textit{ mean rate stable} across the FL process (or formally, $\lim _{T \rightarrow \infty} \mathbb{E}[Z_{T,n}]/T=0$), the time average arrival rates of the queue will be smaller than the service rates, namely, we have:
\begin{equation}
\frac{1}{T}\lim _{T \rightarrow \infty} \sum_{t=1}^{T}\mathbb{E}[\beta- x_{t,n} ] \leq 0
\end{equation}
Through basic mathematics operations, we can reconstruct the above inequality into the form of (\ref{fairness}) with ease. This completes the proof.
\end{proof}
\par
\begin{remark}
Intuitively, the length of the queue will soar towards infinity if the long-term fairness constraint is violated, (i.e. when the real chosen rate could not match up with the expected guaranteed selection rate), which is formally justified by Theorem \ref{fairness queue theorem}. To guarantee the fairness constraint, the queue has to remain \textit{mean rate stable} and a qualified algorithm is supposed to achieve this goal. Apart from this conclusion, we shall note that the stabilized queue length could also reflect the degree of fairness. For example, if a client never being selected in the first few limited round, its corresponding queue length will soar to a positive value. After that, if its real selection rate basically flats with the expected guaranteed selection rate, its queue still remains \textit{mean rate stable} and the queue length will slightly fluctuate over the same positive value. Intuitively, the bigger this value is, the unfairer the selection policy could be, as it demonstrates more violation of the fairness constraint in the initial stage. This conclusion could also be derived from the results in our experiments, which will be presented later.
\end{remark}
With Theorem \ref{fairness queue theorem}, now we have transformed the troublesome time-coupling constraint into the goal of ensuring the virtual queues mean rate stable across the FL process.  To reach this end, a straightforward approach is to bound every increase of queues so that they could not grow to infinity. Under this motivation, we shall leverage Lyapunov optimization technique to bound the growth of virtual queues while simultaneously minimizing the objective in \textit{P1}. First, we establish the quadratic Lyapunov function, with the following form:
\begin{equation}
\label{lyapunov function}
 \mathcal {L}(\boldsymbol{\Theta}(t))=\frac{1}{2}\sum_{n\in \mathcal{N}} Z_{t,n}^2
\end{equation}
where $\boldsymbol{\Theta}(t)\triangleq \{Z_{t,n}\}_{n\in\mathcal{N}}$ contains the backlogs of all the virtual queues.
\par
Aiming at bounding the expected increase of $\mathcal {L}(\boldsymbol{\Theta}(t))$ for one single round, we first formulate the \textit{Lyapunov drift} to measure it, basically, we have:
\begin{equation}
\label{drift}
 \Delta(\boldsymbol{\Theta}(t))=\mathbb{E}[ \mathcal {L}(\boldsymbol{\Theta}(t+1))-\mathcal {L}(\boldsymbol{\Theta}(t))  | \boldsymbol{ \Theta}(t)   ]
\end{equation}
As the backlogs of queues $\boldsymbol{\Theta}(t)$ can be known to the scheduler when being scheduled in an online manner, we take it as the condition in the Lyapunov drift. It is notable that the conditional expectation here is with respect to the availability of clients (which is a stochastic variable) as well as the possibly random selection policy. For ease of later interpretation, we let $\omega_t \triangleq \{I_{t,n}\}_{n\in \mathcal{N}}$ to capture the stochastic availability.\par
Recall that the objective of \textit{P1} is to minimize the model exchange time while satisfying the given constraints. This motivates us to combine the objective function into the drift function. Formally, we term such a combination as \textit{drift-plus-cost} function, with the following form:
  \begin{equation}
  \Delta(\boldsymbol{\Theta}(t)) + V \mathbb{E}[ f(\mathcal{S}_t,\boldsymbol{\tau}_t)  |  \boldsymbol{\Theta}(t) ]
\end{equation}
where $V \geq 0$ is a penalty factor set for the purpose of balancing the tradeoff between minimizing the objective and satisfying the fairness constraint. Such a parameter is crucial for the algorithm's performance and we will conduct a specific analysis to it in the next section. Note that the conditioned expectation being taken here is also with respect to stochastic events $\omega(t)$ and the possibly random policy as well. Now we are going to introduce a potential upper bound for the drift-plus-cost function. We show the result by Theorem \ref{first upper bound for drift plus cost}.
\begin{theorem}
\label{first upper bound for drift plus cost}
Conditioning on the queues' backlogs $\boldsymbol{\Theta}(t)$, the drift-plus-cost function for our system model could be bounded into the following form, where $\Gamma=N\left(1+\beta^2 \right)/2$ is a constant.
  \begin{align}
  \label{drift plus cost bound}
\begin{split}
  &\Delta(\boldsymbol{\Theta}(t))+V \mathbb{E}[f(\mathcal{S}_t,\boldsymbol{\tau}_t)|\boldsymbol{\Theta(t)} ] \\ &\leq  \Gamma+ \sum_{n \in \mathcal{N}} Z_{t,n} \mathbb{E}[ \beta-x_{t,n} |  \boldsymbol{\Theta}(t)  ]+V \mathbb{E}[ f(\mathcal{S}_t,\boldsymbol{\tau}_t)| \boldsymbol{\Theta}(t)) ]
\end{split}
\end{align}
\end{theorem}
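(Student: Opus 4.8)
The plan is to carry out the standard drift-plus-cost manipulation of Lyapunov optimization, working directly from the quadratic Lyapunov function (\ref{lyapunov function}) and the queue update rule (\ref{Z queue}). First I would bound the per-round change of each squared backlog. Squaring the update $Z_{t+1,n}=[Z_{t,n}+\beta-x_{t,n}]^+$ and invoking the elementary inequality $([a]^+)^2\le a^2$ (equality when $a\ge 0$, and the left-hand side vanishes otherwise), I obtain
\[
Z_{t+1,n}^2 \le Z_{t,n}^2 + (\beta - x_{t,n})^2 + 2 Z_{t,n}(\beta - x_{t,n}),
\]
so that after rearranging and halving, $\tfrac{1}{2}(Z_{t+1,n}^2 - Z_{t,n}^2) \le \tfrac{1}{2}(\beta-x_{t,n})^2 + Z_{t,n}(\beta-x_{t,n})$.

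Next I would control the quadratic term. Since $x_{t,n}\in\{0,1\}$ and $\beta\in[0,1]$, expanding gives $(\beta - x_{t,n})^2 = \beta^2 - 2\beta x_{t,n} + x_{t,n}^2 \le \beta^2 + x_{t,n}^2 \le 1 + \beta^2$, where the final inequality uses $x_{t,n}^2 = x_{t,n}\le 1$. Summing the increment inequality over all $n\in\mathcal{N}$ and recalling the definition of $\mathcal{L}(\boldsymbol{\Theta}(t))$ then yields
\[
\mathcal{L}(\boldsymbol{\Theta}(t+1)) - \mathcal{L}(\boldsymbol{\Theta}(t)) \le \frac{N(1+\beta^2)}{2} + \sum_{n\in\mathcal{N}} Z_{t,n}(\beta - x_{t,n}),
\]
in which the leading constant is precisely $\Gamma$.

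Finally, I would take the conditional expectation given $\boldsymbol{\Theta}(t)$ on both sides. Because each $Z_{t,n}$ is fully determined by the conditioning event $\boldsymbol{\Theta}(t)$, it factors out of the expectation, which then acts only on $x_{t,n}$ (a function of the stochastic availability $\omega_t$ and the possibly randomized policy). This produces $\Delta(\boldsymbol{\Theta}(t)) \le \Gamma + \sum_{n\in\mathcal{N}} Z_{t,n}\,\mathbb{E}[\beta - x_{t,n}\mid \boldsymbol{\Theta}(t)]$, and adding the common penalty term $V\,\mathbb{E}[f(\mathcal{S}_t,\boldsymbol{\tau}_t)\mid\boldsymbol{\Theta}(t)]$ to both sides reproduces (\ref{drift plus cost bound}) verbatim. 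I expect no genuine obstacle here, as this is a routine instance of the Lyapunov drift technique; the one step meriting care is the projection, where one must invoke $([a]^+)^2\le a^2$ rather than discarding the $[\cdot]^+$ operator outright, since replacing the clipped update by the unclipped square is exactly what preserves the linear cross term $Z_{t,n}(\beta-x_{t,n})$ that carries the fairness signal into the final bound.
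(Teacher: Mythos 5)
Your proposal is correct and follows essentially the same route as the paper's proof in Appendix A: square the queue update, bound $(\beta-x_{t,n})^2\le 1+\beta^2$ using $x_{t,n}\in\{0,1\}$, sum over $n$, take conditional expectation, and add the penalty term. In fact you are slightly more careful than the paper at the projection step, since you explicitly invoke $([a]^+)^2\le a^2$ where the paper writes the squared update as an equality.
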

\if\arxiv1
\begin{proof}
Taking square of  (\ref{Z queue}), we have 
 \begin{align}
\begin{split}
 Z^2_{t+1,n} = Z^2_{t,n}+2Z_{t,n}\left(\beta-x_{t,n} \right)+  \left(\beta-x_{t,n}\right)^2
\end{split}
\end{align}
Then the difference between $\frac{1}{2}Z^2_{t+1,n}$ and $\frac{1}{2}Z^2_{t,n}$ becomes:
\begin{align}
\label{Z difference}
 \begin{split}
&\frac{1}{2}\left(Z_{t+1,n}^2-Z_{t,n}^2\right)\\&= \frac{1}{2}\left(\beta-x_{t,n}\right)^2+Z_{t,n}(\beta-x_{t,n} )  \\ 
&\leq \frac{1}{2}\left( x_{t,n}^2+ \beta^2 \right)+Z_{t,n}\left(\beta-x_{t,n}\right)\\
&\overset{(a)}\leq \frac{1}{2} ( 1+\beta^2 )+Z_{t,n}\left(\beta-x_{t,n}\right)
\end{split}
\end{align}
Among which, (a) is valid since $x_{t,n}\in\{0,1\}$, trivially we have $x_{t,n}^2\leq 1$. 

Now combining (\ref{lyapunov function}), (\ref{drift}) and (\ref{Z difference}), it yields: 
\begin{align}
\label{drift bound}
 \begin{split}
\Delta(\Theta(t)) \leq & \Gamma+ \sum_{n\in\mathcal{N}} Z_{t,n} \mathbb{E}[  \beta-x_{t,n} |  \boldsymbol{\Theta}(t)  ]\\
\end{split}
\end{align}
where  $\Gamma= N\left(1+\beta^2 \right)/2 $.
\par
Plugging $V \mathbb{E}[  f(\mathcal{S}_t,\boldsymbol{\tau}_t) | \boldsymbol{\Theta}(t)) ] $ into (\ref{drift bound}), it can smoothly transform to the form in (\ref{drift plus cost bound}). This completes the proof.
\end{proof}
\else
\begin{proof}
The complete proof is given in Appendix \ref{Proof1}.
\end{proof}
\fi 
Intuitively, if we minimize the Right Hand Side (R.H.S) of  (\ref{drift plus cost bound}), the fairness virtual queues could be somehow maintained stable, while the objective function is also being minimized. Now shall introduce our step-by-step online scheduling problem by giving \textit{P2}:
  \begin{align}
\begin{split}
 \textit{(P2)}:  &\min_{ \bm x_{t}   }\quad \Gamma+\sum_{n\in \mathcal{N}} Z_{t,n} (\beta-x_{t,n})+V  \dot{f}(\bm x_t,\boldsymbol{\tau}_t)\\
s.t. \quad&  \sum_{n\in \mathcal{N}} x_{t,n}=\min \left\{ m, \sum_{n\in \mathcal{N}} I_{t,n} \right\}\\
& x_{t,n} \leq I_{t,n}\\
& x_{t,n}  \in \{0,1\}
\end{split}
\end{align}
we first have to make it clear that we use $\bm x_{t}$ to substitute all the $\mathcal{S}_t$ in \textit{P1}, making it a clearer form. Here $\dot{f}(\bm x_t,\boldsymbol{\tau}_t)=\max_{n \in \mathcal{N}}\{ x_{t,n} \tau_{t,n} \}$ is an equivalent form to $f(\mathcal{S}_t,\boldsymbol{\tau}_t)$.
While solving \textit{P2} on every round, the R.H.S of (\ref{drift plus cost bound}) can be minimized. The rationale behind is quite evident. As we have done the minimization under every round (alternatively, under every $\omega_t$, since $\omega_t$ is an independent sampling for each round), then the expectation with respect to $\omega_t$ is also being minimized. Note here that $\omega_t$ is indeed observable for an online algorithm since an online algorithm makes scheduling after the stage of availability report, making it accessible to this particular information.
\par
For briefness, we eliminate all the constants (i.e. $\Gamma$, $Z_{t,n}\beta$) in the objective of \textit{P2} and transform it to \textit{P3}:
  \begin{align}
\begin{split}
\textit{(P3)}:&  \min_{ \mathbf{x}_t} \quad   V \max_{n \in \mathcal{N}}\{ x_{t,n} \tau_{t,n}  \}-\sum_{n\in \mathcal{N}}Z_{t,n} x_{t,n}\\
s.t. \quad&  \sum_{n\in \mathcal{N}} x_{t,n}=\min \left\{ m, \sum_{n\in \mathcal{N}} I_{t,n} \right \}\\
& x_{t,n} \leq I_{t,n}\\
& x_{t,n}  \in \{0,1\}
\end{split}
\end{align}

But note that such a problem remains unsolvable yet since the real model exchange time of all the clients (or $\tau_{t,n}$) is not known to us before real scheduling. In the next subsection, we will present a $\rm C^2MAB$ estimation to conquer such a barrier.
\subsection{Estimation of Model Exchange Time with $\rm \mathbf{C^2MAB}$ }
\subsubsection{Background knowledge on $\rm \mathbf{C^2MAB}$ and UCB}
Each round selection in a Contextual Combinatorial Multi Arm Bandit ($\rm C^2MAB$) is characterized by a tuple $\left( \mathcal{N},\mathcal{S}_t, \{\bm\theta_{n}^*\}_{n \in \mathcal{N}} ,\{\bm c_{t,n}\}_{n \in \mathcal{N}}, \{\epsilon_{t,n}\}_{n \in \mathcal{N}}, f( \cdot ) \right)$, in which $\mathcal{N}$ represents the arm set and $\mathcal{S}_t$ is another set that catpures all the possible combination of arms. $\bm c_{t,n}^{}$ and $\bm \theta_{n}^*$ represents the contextual vector and coefficient vector respectively, among which, $\bm c_{t,n}$ is known before each round scheduling but dynamic between rounds, while $\bm \theta_{n}^*$ is unknown but stationary. After each round of scheduling, a combination of arms (often being called as a super arm) $S_t \subset \mathcal{S}_t $ is put into play. Then loss drawn from each selected arm, formulated by $ l_{t,n}=\bm c_{t,n}^{\top} \bm \theta_{n}^*+ \epsilon_{t,n}, n \in S_t $ is revealed to the scheduler, and meanwhile, a collective loss $ f( \{ r_{n,t} \}_{n \in S_t} ) $ is imposed. Our ultimate aim in the $\rm C^2MAB$ setting is to minimize the expected cumulative penalty $ \frac{1}{T}\sum_{t=1}^{T} \mathbb{E}\left[ f( \cdot ) \right]$ as far as possible by a careful selection on $S_t$. \par
Now we shall give a high-level description of a plausible solution for $\rm {C^2MAB}$, i.e., a UCB algorithm. The UCB algorithm takes the upper confidence bound as the optimistic estimation of the expected loss in each round. As the historical data accumulated, (i.e. $ l_{t,n}$ in the previous rounds), the bound could be narrowed and eventually converges to the real value, and thereby gaining more precision for our scheduling. By this means, the expected cumulative penalty could be minimized to the full extent with the increase of rounds of play.
\subsubsection{Application}
Recall that the information of model exchange time, or at least an estimated one, is supposed to be fetched before real client selection. One can take advantage of a $\rm {MAB}$ based technique to predict the model exchange time for all clients based on their historical performance (or to say, their reputation). In particular, each client can be regarded as an arm$\footnote{We use an arm to represent a specific client in our later analysis.}$ in a bandit setting and a combination of them (i.e. a super arm) is put into training, after which, the model exchange time for the selected arm, namely, $\{\tau_{t,n}\}_{n \in \mathcal{S}_t}$ can be observed by the scheduler.
 \par Normally, the model exchange time is associated with the client's computation capacity, running status as well as the bandwidth allocation for the model update. In this regard, we consider introducing linear contextual bandit into our estimation. Formally, we let $\bm c_{t,n} \triangleq [ 1/ \mu_{t,n},s_{t,n},M/B_{t,n}]^{\top}$ denote the contextual feature vectors that are collected by the scheduler before the scheduling phase. More explicitly, $\mu_{t,n}$ is the ratio of available computation capacity of client $n$ over round $t$. We can simply comprehend $\mu_{t,n}$ as the available CPU ratio of the client\footnote{ Note that $\mu_{t,n}$ could exceed 100\% since a client could have more than 1 CPUs, say, $\mu_{t,n}=200\%$ when 2 CPUs are free.}. A binary indicator $s_{t,n}$ indicates if client $n$ has participated in training in the last round. $M$ is the size of the model's parameters (measured by bit) and $B_{t,n}$ indicates the allocated bandwidth. Barring the available computation capacity of clients (i.e. $\mu_{t,n}$), which have to be proactively reported by the clients, all the other information could be fetched by the servers with ease. Therefore, here we can just comprehend the contextual feature $\bm c_{t,n}$ as some prior information known by us before we do the scheduling. Given the contextual features, we assume that the sampling value of $\tau_{t,n}$ complies with the following equation:
 \begin{align} \label{time model} \tau_{t,n}= \bm c_{t,n} ^{\top} \bm 	\theta_{n}^*  +\epsilon_{t,n}\end{align}
where $\bm \theta_{n}^* \triangleq [\tau_n^{b},\tau_n^s,1/\eta]^{\top}$ captures the static coefficient factors that are presumed to be unknown to the scheduler as they are hard to be detected by the server or even by the clients themselves. More explicitly, $\tau_n^{b}$ is the local training time for 100\% computation capacity. Multiplying it with the first element in $\bm c_{t,n}$, we get the approximated local training time under the computation capacity provided by clients.
$\tau_n^s$ denotes the cold start time, multiplying which with the second element $s_{t,n}$ in contexts yields the real data preparation time. This formulation is derived from the fact that clients who did not undertake the previous round of training need to spend extra time for data preparation, say, loading the data into memory. Likewise, we let $\eta\triangleq \log(1+\text{SNR})$ and multiplying which with $B_{t,n}$ yields the Shannon formula that we use to calculate the uploading data rate. Here SNR is an abbreviation of Signal-to-Noise Ratio, which is associated with the client profile (e.g. transmission power and channel condition). In this regard, $M/(B_{t,n}\eta)$ can fully represent the model uploading time for client $n$. In light of our formulation, $\bm c_{t,n} ^{\top} \bm \theta_n^*$ yields the approximation of the expected model exchange time.\par
 In addition, acknowledging some deviation, we admit a noise factor $\epsilon_{t,n}$ in our estimation, which is assumed to be a zero-mean random variable, conditionally sampling from an unknown distribution with left-bounded support, i.e. Supp$(\epsilon_{t,n}| \bm c_{t,n} ^{\top})= (a,b]$ where $a>-\bm c_{t,n} ^{\top} \bm \theta_{n}^*$ and $b$ is arbitrary. This assumption is made to ensure that $\tau_{t,n}$ must be always positive. Also, we have to make sure that $\epsilon_{t,n}$ is conditionally $R$-sub-Gaussian where $R \geq 0$ is a fixed constant. Formally, we need:
\begin{equation}\forall \Lambda \in \mathbb{R} \quad \mathbb{E}\left[e^{\Lambda \epsilon_{t,n}} \mid \bm c_{1: t,n}, \epsilon_{1: t-1,n}\right] \leq \exp \left(\frac{\Lambda^{2} R^{2}}{2}\right) \end{equation}
This assumption is necessary for the regret analysis of a linear bandit, which is also adopted by \cite{abbasi2011improved}. Though we admit some loss of generality for the noise assumption, we argue that a great number of distribution families in nature corresponds to $R$-sub-Gaussian (e.g. any distributions with zero mean bounded support, zero-mean Gaussian distribution, etc), so the assumption would not compromise the objectivity of this paper. \par
Now we let $\tau_{t,n}^*=\mathbb{E}[\tau_{t,n}]=\bm c_{t,n} ^{\top} \bm 	\theta_{n}^*$. If $\tau_{t,n}^*$ is clearly known to us, we can safely substitute $\tau_{t,n}$ in \textit{P3} with it.  Recall that $\bm 	\theta_{n}^*$ is an inherent feature of each arm (or client) that is supposed to be static, unchangeable over time. With this assumption, although the scheduler has no access to the real value of $\bm 	\theta_{n}^*$, which creates a barrier in the calculation of $\tau_{t,n}^*$, this value can be predicted using the historical information (or the reputation of an arm). For such a linear formulation, ridge regression could suit well. Now we let  $(\mathbf{D}_{t,n},\mathbf{y}_{t,n})$ to represent $p$ pieces of client $n$'s historical performance (i.e. the previous model exchange time and the contexts) that are obtained before round $t$. Formally, we have:
\begin{equation}
\mathbf{D}_{t,n}=\left[\begin{array}{ll}
{\mathbf{c}_{n}^{(1)}  }\\
\vdots\\
{\mathbf{c}_{ n}^{(p)}}\ 
\end{array}\right]_{p \times 3} \quad \mathbf{y}_{t,n}=\left(\begin{array}{cc}
{\tau_{n}^{(1)}}  \\ 
\vdots\\
{\tau_{n}^{(p)}} 
\end{array}\right)
\end{equation}
where $\mathbf{c}_{ n}^{(p)}$ and $\tau_{n}^{(p)}$ respectively represent the context and the real model exchange time of the $p$-th play of the arm $n$. With ridge regression, we can empirically estimate $\bm 	\theta_{n}^*$ with $\hat{\boldsymbol{\theta}}_{t,n}$:
\begin{equation}
\hat{\boldsymbol{\theta}}_{t,n}=\left(\mathbf{D}_{t,n}^{\top} \mathbf{D}_{t,n}+\lambda \mathbf{I}_{3}\right)^{-1} \mathbf{D}_{n}^{\top} \mathbf{y}_{t,n}
\end{equation}

For ease of algorithm's design, we then transform $\hat{\boldsymbol{\theta}}_{t,n}$ into an equivalent form, as follows:
\begin{equation}
\label{theta prediction}
\hat{\boldsymbol{\theta}}_{t,n}=\mathbf{H}_{t-1,n}^{-1} \mathbf{b}_{t-1,n}
\end{equation}
where $\mathbf{H}_{T,n}=\mathbf{H}+\sum_{t=1}^{T} x_{t,n} \mathbf{c}_{t,n}\mathbf{c}_{t,n}^{\top}$ and  $\mathbf{b}_{T,n}=\sum_{t=1}^{T} x_{t,n} \tau_{t,n} \mathbf{c}_{t,n}$. Among which, $\mathbf{H}=\lambda \mathbf{I}$.
\par
As we are going to take advantage of the UCB algorithm we previously discussed as our solution, we resort to $\bar{\tau}_{t,n}$ as the optimistic estimation of ${\tau}_{t,n}$, which has the following form:
\begin{equation}
\label{UCB estimation}
\bar{\tau}_{t,n}\triangleq \max\left\{\bm c_{t,n} ^{\top}  	\hat{\boldsymbol{\theta}}_{t,n}-\alpha_{t} \sqrt{\mathbf{c}_{t,n}^{\top} \mathbf{H}_{t-1,n}^{-1} \mathbf{c}_{t,n}},0\right\}
\end{equation}
where $\alpha_{t}$ is an exploration parameter.\par
Now we show in Lemma  \ref{confidence bound} the validity of  the given confidence bound (i.e. to show the real expected exchange time does not deviate much from the confidence bound with a high probability).
\begin{lemma}  \label{confidence bound}  If we set $\alpha_{t}=R\sqrt{3 \log \left(\frac{1+t  L^2/ \lambda}{\delta}\right)}+\lambda^{1 / 2} S$,
with probability at least $1-\delta,$ we have
\begin{equation}
0 \leq \tau_{t,n}^{*}-\bar{\tau}_{t,n} \leq 2 \alpha_{t}\left\|\mathbf{c}_{t,n}\right\|_{\mathbf{H}_{t-1,n}^{-1}}
\end{equation}
for any round $t \geq 1$ and any arm $n\in \mathcal{N}$
\end{lemma}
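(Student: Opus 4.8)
The plan is to bound the scalar prediction error $|\mathbf{c}_{t,n}^{\top}\hat{\boldsymbol{\theta}}_{t,n}-\tau_{t,n}^{*}|$ by a product of the context norm $\|\mathbf{c}_{t,n}\|_{\mathbf{H}_{t-1,n}^{-1}}$ and the coefficient estimation error $\|\hat{\boldsymbol{\theta}}_{t,n}-\boldsymbol{\theta}_{n}^{*}\|_{\mathbf{H}_{t-1,n}}$, to show the latter is at most $\alpha_t$ with high probability, and finally to dispatch the truncation $\max\{\cdot,0\}$ in the definition of $\bar{\tau}_{t,n}$ by a short case analysis. I would begin by writing the estimation error in closed form. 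Since $\mathbf{b}_{t-1,n}=\sum_{s<t} x_{s,n}\tau_{s,n}\mathbf{c}_{s,n}$ and $\tau_{s,n}=\mathbf{c}_{s,n}^{\top}\boldsymbol{\theta}_{n}^{*}+\epsilon_{s,n}$, substituting into $\hat{\boldsymbol{\theta}}_{t,n}=\mathbf{H}_{t-1,n}^{-1}\mathbf{b}_{t-1,n}$ and using $\mathbf{H}_{t-1,n}=\lambda\mathbf{I}+\sum_{s<t}x_{s,n}\mathbf{c}_{s,n}\mathbf{c}_{s,n}^{\top}$ gives
\begin{equation}
\hat{\boldsymbol{\theta}}_{t,n}-\boldsymbol{\theta}_{n}^{*}=\mathbf{H}_{t-1,n}^{-1}\Big(\sum_{s<t} x_{s,n}\epsilon_{s,n}\mathbf{c}_{s,n}-\lambda\boldsymbol{\theta}_{n}^{*}\Big),
\end{equation}
so the error is driven by the self-normalized vector martingale $\sum_{s<t} x_{s,n}\epsilon_{s,n}\mathbf{c}_{s,n}$ plus a regularization term.

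The key probabilistic input is the self-normalized tail bound for ridge regression under conditionally $R$-sub-Gaussian noise, which I would invoke from \cite{abbasi2011improved}: with probability at least $1-\delta$, simultaneously for all $t\ge1$,
\begin{equation}
\big\|\hat{\boldsymbol{\theta}}_{t,n}-\boldsymbol{\theta}_{n}^{*}\big\|_{\mathbf{H}_{t-1,n}}\le R\sqrt{2\log\frac{\det(\mathbf{H}_{t-1,n})^{1/2}\det(\lambda\mathbf{I})^{-1/2}}{\delta}}+\lambda^{1/2}S,
\end{equation}
where $S$ bounds $\|\boldsymbol{\theta}_{n}^{*}\|$. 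To reach the explicit radius $\alpha_t$, I would bound the log-determinant term: with $\|\mathbf{c}_{s,n}\|\le L$, the determinant--trace (AM--GM) inequality yields $\det(\mathbf{H}_{t-1,n})\le(\lambda+tL^2/d)^d$ for the context dimension $d=3$, hence $\det(\mathbf{H}_{t-1,n})^{1/2}\det(\lambda\mathbf{I})^{-1/2}\le(1+tL^2/(d\lambda))^{d/2}$. Substituting, using $d=3$, $1+tL^2/(3\lambda)\le1+tL^2/\lambda$, and $-2\log\delta\le-3\log\delta$ for $\delta\in(0,1]$, the first summand is at most $R\sqrt{3\log((1+tL^2/\lambda)/\delta)}$; thus on this event $\|\hat{\boldsymbol{\theta}}_{t,n}-\boldsymbol{\theta}_{n}^{*}\|_{\mathbf{H}_{t-1,n}}\le\alpha_t$.

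On that event I would apply the generalized Cauchy--Schwarz inequality $|\mathbf{x}^{\top}\mathbf{y}|\le\|\mathbf{x}\|_{A^{-1}}\|\mathbf{y}\|_{A}$ with $A=\mathbf{H}_{t-1,n}$, $\mathbf{x}=\mathbf{c}_{t,n}$, $\mathbf{y}=\hat{\boldsymbol{\theta}}_{t,n}-\boldsymbol{\theta}_{n}^{*}$, obtaining the two-sided scalar bound $|\mathbf{c}_{t,n}^{\top}\hat{\boldsymbol{\theta}}_{t,n}-\tau_{t,n}^{*}|\le\alpha_t\|\mathbf{c}_{t,n}\|_{\mathbf{H}_{t-1,n}^{-1}}$. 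Writing $g\triangleq\mathbf{c}_{t,n}^{\top}\hat{\boldsymbol{\theta}}_{t,n}$ and $w\triangleq\alpha_t\|\mathbf{c}_{t,n}\|_{\mathbf{H}_{t-1,n}^{-1}}$, this reads $g-w\le\tau_{t,n}^{*}\le g+w$, while $\bar{\tau}_{t,n}=\max\{g-w,0\}$. The lower bound $\tau_{t,n}^{*}\ge\bar{\tau}_{t,n}$ holds because $\tau_{t,n}^{*}\ge g-w$ when $\bar{\tau}_{t,n}=g-w$, and because $\tau_{t,n}^{*}\ge0$ (ensured by the left-bounded noise support) when $\bar{\tau}_{t,n}=0$. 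For the upper bound, if $\bar{\tau}_{t,n}=g-w$ then $\tau_{t,n}^{*}-\bar{\tau}_{t,n}\le(g+w)-(g-w)=2w$; if $\bar{\tau}_{t,n}=0$ then $g-w<0$, so $\tau_{t,n}^{*}\le g+w<2w$. Both cases give $\tau_{t,n}^{*}-\bar{\tau}_{t,n}\le2\alpha_t\|\mathbf{c}_{t,n}\|_{\mathbf{H}_{t-1,n}^{-1}}$, which closes the argument.

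I expect the genuinely hard ingredient to be the self-normalized tail bound invoked in the second paragraph: its proof rests on a method-of-mixtures (Laplace) argument applied to the exponential supermartingale generated by the $R$-sub-Gaussian increments $x_{s,n}\epsilon_{s,n}\mathbf{c}_{s,n}$, and this is exactly why the sub-Gaussianity assumption was imposed; I would cite \cite{abbasi2011improved} rather than reprove it. The remaining steps are routine but need care in two spots: matching the stated constant in $\alpha_t$ through the determinant bound (tracking the dimension $d=3$ and the sign of $\log\delta$), and the truncation case analysis, whose lower-bound part relies essentially on the positivity $\tau_{t,n}^{*}\ge0$ guaranteed by the noise-support assumption.
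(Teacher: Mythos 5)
Your proposal is correct and follows essentially the same route as the paper: invoke the self-normalized confidence ellipsoid of Abbasi-Yadkori et al.\ to get $\|\hat{\boldsymbol{\theta}}_{t,n}-\boldsymbol{\theta}_{n}^{*}\|_{\mathbf{H}_{t-1,n}}\le\alpha_t$, convert it to a scalar bound via the generalized Cauchy--Schwarz inequality, and then handle the $\max\{\cdot,0\}$ truncation by a case split that uses the positivity of $\tau_{t,n}^{*}$ for the lower bound. The only difference is cosmetic: you unfold the determinant--trace step that produces the explicit $\sqrt{3\log(\cdot)}$ radius (the paper simply states that form as a cited lemma), and you phrase the truncation argument as two explicit cases where the paper writes it as a $\min$ of two terms.
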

\begin{proof}
The complete proof is given in Appendix \ref{Proof2}.
\end{proof}
 We first note here that Lemma \ref{confidence bound} will be used in our analysis of regret bound, which will be shown in the next section. \par
As we have decided $\bar{\tau}_{t,n}$ as our estimation of $\tau_{t,n}$, we now transfer \textit{P3} to the ultimate form, shown in the following:
  \begin{align}
\begin{split}
\textit{(P4)}:&  \min_{  \mathbf{x}_t} \quad   V \max_{n \in \mathcal{N}}\{ x_{t,n} \bar{\tau}_{t,n}  \}-\sum_{n\in \mathcal{N}}Z_{t,n} x_{t,n}\\
s.t. \quad&  \sum_{n\in \mathcal{N}} x_{t,n}=\min \left \{ m, \sum_{n\in \mathcal{N}} I_{t,n} \right \}\\
& x_{t,n} \leq I_{t,n}\\
& x_{t,n}  \in \{0,1\}
\end{split}
\end{align}
Then transformed problem is an Integer Linear Programming (ILP) problem, which is indeed solvable and for which we design a divide-and-conquer-based algorithm for an efficient settlement, shown in the coming section.
\section{Algorithms and Analysis}
In this section, we first present the detail of our proposed algorithm, and then some related analysis is given.
\subsection{Algorithms Design}
\begin{algorithm}[!hbtp]  
        \caption {Divide-and-conquer solution for \textit{P4}} 
        \begin{algorithmic}[1] 
        \REQUIRE ~~\\
        The estimated time for model exchange; $ \{\bar{\tau}_{t,n}\}_{n \in \mathcal{N}}$\\
        The expected number of chosen arms; $m$\\
        Indicator function of arms' availability; $\{I_{t,n}\}_{n \in \mathcal{N}}$\\
        Length of virtual queue;  $ \{Z_{t,n}\}_{n \in \mathcal{N}}$
        \ENSURE~~\\
        The solution for \textit{P4} in round $t$; $\{x_{t,n}\}_{n \in \mathcal{N}}$
      \STATE Set  $\boldsymbol{Z}^*_t=\{Z_{t,n}\}_{I_{t,n}= 1}$ 
      \STATE Use $\mathcal{A}_t$ to store arms with an descending order  of $\boldsymbol{Z}^*_t$
      \STATE Use $\mathcal{N}^+_t$ to store all the $n$ that satisfies $I_{t,n}= 1$
      \STATE Set $k=\min\{ m, \sum_{n\in \mathcal{N}} I_{t,n} \}$ // \# of clients to be picked
	\FOR{$n_{max}\in \mathcal{N}^+_t$ }
	\STATE Initialize an empty set $\mathcal{S}_{n_{max}}$
	\FOR{$n \in \mathcal{A}_t$ }
	\IF{$\bar{\tau}_{t,n}\leq\bar{\tau}_{t,n_{max}}$}
	\STATE Push $n$ into  $\mathcal{S}_{n_{max}}$
	\ENDIF
	\IF{$length(\mathcal{S}_{n_{max}}) == k$}
	\STATE Calculate the objective of \textit{P4} as $F_{n_{max}}$ based on $\mathcal{S}_{n_{max}}$
	\STATE Break the first loop
	\ENDIF
	\ENDFOR
	\ENDFOR
	\STATE Set $n^*$ the index of minimum $F_{n_{max}}$ among those being calculated in line 12.
	\STATE Return $\{x_{t,n}\}$ that represented by $\mathcal{S}_{n^*}$ 
        \end{algorithmic} 		
	\end{algorithm}
 Noticeably, the first term on the objective function of \textit{P4} has only finite possible values, so we can simply iterate these values and transform them into the constraint in the sub-problems. By this means, we divide the problem into a few smaller-scale sub-problems, which are easier to conquer.
Formally, the sub-problem after division is shown in the following:
 \begin{align}
\begin{split}
\textit{(P4-SUB)}:&  \min_{ \mathbf{x}_t} \quad  -\sum_{n\in \mathcal{N}}Z_{t,n} x_{t,n}\\
s.t. \quad&  \sum_{n\in \mathcal{N}} x_{t,n}=\min \left \{ m, \sum_{n\in \mathcal{N}} I_{t,n} \right \}\\
&x_{t,n} \bar{\tau}_{t,n} \leq \bar{\tau}_{max} \\
& x_{t,n} \leq I_{t,n}\\
& x_{t,n}  \in \{0,1\}
\end{split}
\end{align}
where $\bar{\tau}_{max}$ is one of the fixed value among the possible values of the first term in \textit{P4}. \textit{P4-SUB} is much easier to conquer. First we only need to filter those qualified clients with a smaller or equal $\bar{\tau}_{t,n}$ to $\bar{\tau}_{max}$, and with an active status (or to say $I_{t,n}=1$). Trivially, the sub-problem can be solved by finding $k=\min\{ m, \sum_{n\in \mathcal{N}} I_{t,n} \}$ clients with the biggest $Z_{t,n}$ among the qualified clients. After the divide-and-conquer process, we only need to compare all the objectives obtained from the sub-problems and select the minimum one as our final achieved solution. The detail of the above process can be found in Algorithm 1, which could at least reach a computation complexity of $\mathcal{O}(N^2)$.
	
\begin{algorithm}[!hbtp]  
        \caption {Reputation Based Client Selection with Fairness (RBCS-F)} 
        \begin{algorithmic}[1] 
        \REQUIRE ~~\\
        The expected number of involved clients each round; $m$ \\
        Exploration parameter; $\alpha_0, \alpha_1,\dots$\\
	 The set of clients; $\mathcal{N}$, Parameter for ridge regression; $\lambda$\\
	 The guaranteed participating  rate; $\beta$\\
	 Parameter for objective balance; $V $\\
        \ENSURE~~\\
        The control policy $\pi=\{x_{t,n}\}_{n \in \mathcal{N}, t=0,1, \dots}$
        \FOR{$n \in \mathcal{N}$ }
         \STATE  Initialize $\mathbf{H}_{0,n} \leftarrow \lambda \mathbf{I}_{3 \times 3}, \mathbf{b}_{0,n} \leftarrow \mathbf{0}_{3}^{\top}$, $Z_{0,n}\leftarrow0$
        \ENDFOR
	\FOR{$t= 1,2\dots$ }
	\STATE Observe current contexts $\{\mathbf{c}_{t,n}\}$ and arms availability $\{I_{t,n}\}$
		\FOR{$n \in \mathcal{N}$ }
			\STATE $\hat{\boldsymbol{\theta}}_{t,n} \leftarrow \mathbf{H}_{t-1,n}^{-1} \mathbf{b}_{t-1,n}$
			\STATE$\hat{\tau}_{t,n} \leftarrow\mathbf{c}_{t,n}^{\top} \hat{\boldsymbol{\theta}}_{t,n} $
			\STATE $\bar{\tau}_{t,n} \leftarrow \hat{\tau}_{t,n}-\alpha_{t} \sqrt{\mathbf{c}_{t,n}^{\top} \mathbf{H}_{t-1,n}^{-1} \mathbf{c}_{t,n}}$
		\ENDFOR
		\STATE // Execute Algorithm 1 for a decision \\ $\{x_{t,n}\} \leftarrow  \text{Algorithm 1}(\{\bar{\tau}_{t,n}\}, m, \{ I_{t,n}\}, \{ Z_{t,n}\})   $
		\STATE Distribute model to the selected clients and observe their model exchange time;$\{\tau_{t,n}\}$
		\FOR{$n \in \mathcal{N}$ }
		\STATE Update $Z_{t,n}$ according to  (\ref{Z queue})
	\STATE $\mathbf{H}_{t,n} \leftarrow \mathbf{H}_{t-1,n}+x_{t,n} \mathbf{c}_{t,n} \mathbf{c}_{t,n}^{\top}$
	\STATE $\mathbf{b}_{t,n} \leftarrow \mathbf{b}_{t-1,n}+ x_{t,n} \tau_{t,n}  \mathbf{c}_{t,n}$
	\ENDFOR
	\ENDFOR
        \end{algorithmic} 		
	\end{algorithm}
With Algorithm 1 introduced, now we are going to discuss our proposed solution for fairness-aware FL, termed Reputation Based Client Selection with Fairness (RBCS-F), shown in Algorithm 2.
The working procedure of RBCS-F is quite intuitive. The algorithm starts with initialization of some parameters in the first three lines, and then begins to start iterative federated learning. In every iteration, the scheduler first observes the contexts and the availability of the arms (i.e. FL clients), then estimates the model exchange time with Eqs. (\ref{theta prediction}) and (\ref{UCB estimation}) using historical information. Taking advantage of the observed context, availability as well as the estimation, the selection scheme for this round could be fetched by Algorithm 1. After the decision, the model would be distributed to the selected clients and gathered after local training. Before the end of a round, the algorithm records the exchange time of the selected clients and update the associated parameters, as shown in lines 14-16.
\subsection{ Theoretical Analysis }
\subsubsection{Regret and Fairness Guarantee}
In an MAB model, regret is a key performance metric that measures the performance gap between a given policy and the optimal policy. Therefore, for ease of analysis, we first define the time average regret of RBCS-F.
\begin{definition}
Time average regret of RBCS-F is defined as:
\begin{equation}
R(T) \triangleq \frac{1}{T}\sum_{t=1}^{T} \mathbb{E}\left[   {f}(\mathcal{S}_t,\bm \tau_t)- {f}(\mathcal{S}^*_t,\bm \tau_t)\right] 
\end{equation}
where we leverage $\mathcal{S}^*_t$ to represent the decision made by the optimal policy while $\mathcal{S}_t$ captures RBCS-F's decision.
\end{definition}
To proceed, we show a strict upper bound on time average regret of RBCS-F in Theorem \ref{regret bound}.
\begin{theorem}
\label{regret bound}
 Given any control parameter $V$, with probability at least $(1- \delta)^2$, the time average regret achieved by RBCS-F is upper bounded by:
\begin{align}
\begin{split}
R(T)\leq&  \frac{N\left(1+\beta^2 \right)}{2V}+ \zeta_T \sqrt{ \frac{6 \log ( 1+T L^2/ 3 \lambda)}{T} }
\end{split}
\end{align}
where $S$ and $L$ are both positive finite constants satisfying $\left\|\bm\theta_{n}^{*}\right\|_{2} \leq$
$S$ and $\left\|\mathbf{c}_{t,n}\right\|_{2} \leq L$ for all $t \geq 1$ and $n \in \mathcal{N}$. And: \par
 $\zeta_T=\max\{K,1\} \cdot \max\left\{2R \sqrt{3 \log \left(\frac{1+T  L^2/ \lambda}{\delta}\right)}+\lambda^{1 / 2} S,1 \right\}$
 where $K$ is a constant value. 
\end{theorem}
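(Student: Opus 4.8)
The plan is to combine the two devices already assembled in the paper: the Lyapunov drift-plus-cost bound of Theorem~\ref{first upper bound for drift plus cost}, which will be responsible for the fairness term $N(1+\beta^2)/(2V)=\Gamma/V$, and the UCB confidence bound of Lemma~\ref{confidence bound}, which will be responsible for the vanishing learning term. Throughout I would work with the expected exchange times $\tau_{t,n}^{*}=\mathbb{E}[\tau_{t,n}]=\bm c_{t,n}^{\top}\bm\theta_{n}^{*}$, so that the zero-mean noise assumption lets me replace the realized $\boldsymbol{\tau}_t$ appearing in the regret definition by $\boldsymbol{\tau}_t^{*}$ under the conditional expectation; justifying this passage for the max-structured loss is one of the delicate points noted below.

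First I would establish a per-round comparison between the decision $\mathcal{S}_t$ that RBCS-F actually makes by solving \textit{P4} with the optimistic estimates $\bar{\tau}_{t,n}$, and the benchmark decision $\mathcal{S}_t^{*}$. Writing the per-round objective as $g(\mathcal{S},\boldsymbol{\tau})\triangleq V f(\mathcal{S},\boldsymbol{\tau})+\sum_{n\in\mathcal{N}}Z_{t,n}(\beta-x_{n})$, optimality of $\mathcal{S}_t$ for the \emph{estimated} objective yields $g(\mathcal{S}_t,\bar{\boldsymbol{\tau}}_t)\leq g(\mathcal{S}_t^{*},\bar{\boldsymbol{\tau}}_t)$, the additive constant $\sum_n Z_{t,n}\beta$ being irrelevant. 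I then convert both sides to the true times via Lemma~\ref{confidence bound}: on the benchmark side, optimism $\bar{\tau}_{t,n}\leq\tau_{t,n}^{*}$ gives $f(\mathcal{S}_t^{*},\bar{\boldsymbol{\tau}}_t)\leq f(\mathcal{S}_t^{*},\boldsymbol{\tau}_t^{*})$; on the algorithm's side, the confidence width together with the elementary inequality $\max_n a_n-\max_n b_n\leq\max_n(a_n-b_n)$ gives $f(\mathcal{S}_t,\boldsymbol{\tau}_t^{*})-f(\mathcal{S}_t,\bar{\boldsymbol{\tau}}_t)\leq 2\alpha_t\max_{n\in\mathcal{S}_t}\|\mathbf{c}_{t,n}\|_{\mathbf{H}_{t-1,n}^{-1}}$. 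Combining them produces the per-round inequality
\[
g(\mathcal{S}_t,\boldsymbol{\tau}_t^{*})\leq g(\mathcal{S}_t^{*},\boldsymbol{\tau}_t^{*})+2V\alpha_t\max_{n\in\mathcal{S}_t}\|\mathbf{c}_{t,n}\|_{\mathbf{H}_{t-1,n}^{-1}}.
\]
Since Lemma~\ref{confidence bound} must be invoked once for the arms in $\mathcal{S}_t$ and once for those in $\mathcal{S}_t^{*}$, intersecting the two associated high-probability events is exactly what produces the stated confidence level $(1-\delta)^2$.

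Next I would splice this into the Lyapunov machinery. Adding $V\mathbb{E}[f(\mathcal{S}_t,\boldsymbol{\tau}_t^{*})\mid\boldsymbol{\Theta}(t)]$ to the drift bound $\Delta(\boldsymbol{\Theta}(t))\leq\Gamma+\sum_n Z_{t,n}\mathbb{E}[\beta-x_{t,n}\mid\boldsymbol{\Theta}(t)]$ contained in Theorem~\ref{first upper bound for drift plus cost} exposes precisely $\Gamma+\mathbb{E}[g(\mathcal{S}_t,\boldsymbol{\tau}_t^{*})\mid\boldsymbol{\Theta}(t)]$ on the right, which the per-round comparison upper bounds by $\Gamma+\mathbb{E}[g(\mathcal{S}_t^{*},\boldsymbol{\tau}_t^{*})\mid\boldsymbol{\Theta}(t)]+2V\alpha_t\,\mathbb{E}[\max_{n\in\mathcal{S}_t}\|\mathbf{c}_{t,n}\|_{\mathbf{H}_{t-1,n}^{-1}}\mid\boldsymbol{\Theta}(t)]$. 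Taking the benchmark to be a feasible stationary policy chosen independently of the queues with $\mathbb{E}[x_{t,n}^{*}]\geq\beta$, its fairness contribution $\sum_n Z_{t,n}\mathbb{E}[\beta-x_{t,n}^{*}]$ is non-positive and may be discarded from the upper bound. Summing over $t=1,\dots,T$, the drift telescopes; with the zero initialization $Z_{0,n}=0$, $\mathcal{L}(\boldsymbol{\Theta}(1))=0$ and $\mathcal{L}\geq0$ give $\sum_t\Delta(\boldsymbol{\Theta}(t))=\mathbb{E}[\mathcal{L}(\boldsymbol{\Theta}(T+1))]\geq0$, so it drops out. Dividing by $VT$ then isolates
\[
R(T)\leq\frac{\Gamma}{V}+\frac{2}{T}\sum_{t=1}^{T}\alpha_t\,\mathbb{E}\!\left[\max_{n\in\mathcal{S}_t}\|\mathbf{c}_{t,n}\|_{\mathbf{H}_{t-1,n}^{-1}}\right],
\]
whose first term is already $N(1+\beta^2)/(2V)$.

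Finally I would bound the residual learning term and, in doing so, confront what I expect to be the main obstacle. Because $\alpha_t$ is non-decreasing I replace it by $\alpha_T$. The delicate step is that the confidence width is controlled arm-by-arm, whereas the loss $f=\max_{n}x_{t,n}\tau_{t,n}$ is a maximum over the selected super arm whose maximizing index varies with $t$; translating $\sum_t\max_{n\in\mathcal{S}_t}\|\mathbf{c}_{t,n}\|_{\mathbf{H}_{t-1,n}^{-1}}$ into a single-arm quantity is precisely where the constant $K$ enters $\zeta_T$. Once reduced to a single arm, Cauchy--Schwarz followed by the standard elliptical-potential (log-determinant) lemma, $\sum_{t=1}^{T}\|\mathbf{c}_{t,n}\|_{\mathbf{H}_{t-1,n}^{-1}}^{2}\leq 2\cdot 3\log(1+TL^2/3\lambda)$, gives $\sum_{t=1}^{T}\|\mathbf{c}_{t,n}\|_{\mathbf{H}_{t-1,n}^{-1}}\leq\sqrt{6T\log(1+TL^2/3\lambda)}$. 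Collecting $2\alpha_T$, the factor $K$, and a capping by $1$ into the single constant $\zeta_T=\max\{K,1\}\cdot\max\{2\alpha_T,1\}$ yields the second term $\zeta_T\sqrt{6\log(1+TL^2/3\lambda)/T}$, completing the bound of Theorem~\ref{regret bound}. A secondary subtlety I would have to dispatch is the passage from the realized times to the expected times $\boldsymbol{\tau}_t^{*}$ flagged above, which rests on the zero-mean, left-bounded sub-Gaussian noise assumption.
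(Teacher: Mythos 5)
Your proposal is correct and follows essentially the same route as the paper's proof: the drift bound of Theorem~\ref{first upper bound for drift plus cost} supplies the $\Gamma/V=N(1+\beta^2)/(2V)$ term, the benchmark's fairness feasibility removes $\sum_n Z_{t,n}\mathbb{E}[\beta-x^*_{t,n}]$, and Lemma~\ref{confidence bound} together with the elliptical-potential (log-determinant) lemma yields the $\zeta_T\sqrt{6\log(1+TL^2/3\lambda)/T}$ term. The one structural difference is that you compare $\mathcal{S}_t$ to $\mathcal{S}^*_t$ directly through a two-sided use of optimism, $\bar{\tau}_{t,n}\leq\tau^*_{t,n}\leq\bar{\tau}_{t,n}+2\alpha_t\|\mathbf{c}_{t,n}\|_{\mathbf{H}_{t-1,n}^{-1}}$, whereas the paper routes the comparison through a clairvoyant intermediate policy $\pi^{\prime}$ that minimizes the true-time objective, chaining RBCS-F against $\pi^{\prime}$ in the estimated objective and $\pi^{\prime}$ against $\mathcal{S}^*_t$ in the true objective; the two chains are equivalent and produce the same per-round surplus, so nothing is lost by your shortcut. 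One small correction: the constant $K$ does not come from collapsing the max over the super arm into a single-arm quantity --- in the paper it is a deterministic upper bound on $\tau^*_{t,s}$, which together with $\bar{\tau}_{t,s}\geq 0$ caps the confidence width as $J_1(t)\leq\min\{2\alpha_t\|\mathbf{c}_{t,s}\|_{\mathbf{H}_{t-1,s}^{-1}},K\}\leq\max\{K,1\}\cdot\max\{2\alpha_t,1\}\cdot\min\{\|\mathbf{c}_{t,s}\|_{\mathbf{H}_{t-1,s}^{-1}},1\}$, and this truncation at $1$ is precisely what makes the elliptical-potential lemma (stated for $\min\{\cdot,1\}$) applicable. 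The two delicate points you flag --- passing from the realized maximum to the expected times, and the fact that the maximizing arm $s$ varies with $t$ inside the potential sum --- are genuine, but the paper's own proof treats them no more rigorously than your sketch does, so they are not gaps relative to the paper.
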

\begin{proof}
The complete proof is given in Appendix \ref{Proof3}.
\end{proof}
Now we give another theorem to ensure that the long-term fairness constraint would not be violated.
\begin{theorem}
\label{mean rate stable}
For RBCS-F, the fairness vitual queues are all mean rate stable in any setting of $V$, thus the time average fairness is being guaranteed.
\end{theorem}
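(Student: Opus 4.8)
The plan is to prove the stronger statement that the Lyapunov function grows at most linearly in $T$, from which mean rate stability of every queue follows, and then to invoke Theorem \ref{fairness queue theorem} to recover the long-term fairness guarantee (\ref{fairness}). Concretely, since $\mathcal{L}(\boldsymbol{\Theta}(T))=\frac{1}{2}\sum_{n}Z_{T,n}^2$ by (\ref{lyapunov function}), Jensen's inequality gives $\mathbb{E}[Z_{T,n}]\le\sqrt{\mathbb{E}[Z_{T,n}^2]}\le\sqrt{2\,\mathbb{E}[\mathcal{L}(\boldsymbol{\Theta}(T))]}$ for each $n$, so it suffices to show $\mathbb{E}[\mathcal{L}(\boldsymbol{\Theta}(T))]=\mathcal{O}(T)$; then $\mathbb{E}[Z_{T,n}]/T\to 0$, which is exactly mean rate stability. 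I would obtain this linear growth by establishing a one-step drift bound that is a \emph{constant}, uniform in both $t$ and the queue state $\boldsymbol{\Theta}(t)$, and then telescoping.

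First I would start from the drift-plus-cost bound (\ref{drift plus cost bound}) of Theorem \ref{first upper bound for drift plus cost}; cancelling the common penalty term $V\mathbb{E}[f(\mathcal{S}_t,\boldsymbol{\tau}_t)|\boldsymbol{\Theta}(t)]$ from both sides leaves the pure drift bound $\Delta(\boldsymbol{\Theta}(t))\le \Gamma+\sum_{n\in\mathcal{N}}Z_{t,n}\,\mathbb{E}[\beta-x_{t,n}|\boldsymbol{\Theta}(t)]$. The crux is to control $\sum_{n}Z_{t,n}(\beta-x_{t,n})$ under the RBCS-F decision. Here I would exploit that RBCS-F (through Algorithm 1) \emph{exactly} minimizes the \textit{P4} objective $V\max_{n}\{x_{t,n}\bar{\tau}_{t,n}\}-\sum_{n}Z_{t,n}x_{t,n}$ over all feasible selections. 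Comparing its optimizer against any other feasible selection $\mathbf{x}^*_t$, and using $\max_{n}\{x_{t,n}\bar{\tau}_{t,n}\}\ge 0$ together with the uniform bound $\bar{\tau}_{t,n}\le\tau^{*}_{t,n}=\mathbf{c}_{t,n}^{\top}\boldsymbol{\theta}^*_n\le LS$ (the left inequality is exactly Lemma \ref{confidence bound}, the right one is Cauchy--Schwarz with $\|\mathbf{c}_{t,n}\|_2\le L$ and $\|\boldsymbol{\theta}^*_n\|_2\le S$), I would obtain the pathwise estimate $\sum_{n}Z_{t,n}(\beta-x_{t,n})\le \sum_{n}Z_{t,n}(\beta-x^*_{t,n})+VLS$.

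The comparison policy $\mathbf{x}^*_t$ I would pick is a stationary randomized, availability-only policy satisfying the fairness requirement with $\mathbb{E}[x^*_{t,n}]\ge\beta$ for every $n$; such a policy exists precisely when constraint (\ref{fairness}) is feasible under the hard constraints (\ref{availability}) and (\ref{pick number}), and its randomization is independent of $\boldsymbol{\Theta}(t)$. Conditioning on $\boldsymbol{\Theta}(t)$ then gives $\mathbb{E}[\beta-x^*_{t,n}|\boldsymbol{\Theta}(t)]=\beta-\mathbb{E}[x^*_{t,n}]\le 0$, so the drift collapses to $\Delta(\boldsymbol{\Theta}(t))\le \Gamma+VLS=:C$, a finite constant independent of $t$ and $\boldsymbol{\Theta}(t)$ and holding for \emph{every} choice of $V$. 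Taking total expectations and summing the telescoping series over $t=0,\dots,T-1$ (with $\mathcal{L}(\boldsymbol{\Theta}(0))=0$ since $Z_{0,n}=0$) yields $\mathbb{E}[\mathcal{L}(\boldsymbol{\Theta}(T))]\le CT$, hence $\mathbb{E}[Z_{T,n}]/T\le\sqrt{2C/T}\to 0$. Mean rate stability of all queues then follows for any $V$, and Theorem \ref{fairness queue theorem} delivers the fairness guarantee.

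I expect the main obstacle to be justifying the two ingredients that make the constant-drift bound legitimate, rather than the telescoping itself. The first is the existence of the feasible stationary comparison policy with $\mathbb{E}[x^*_{t,n}]\ge\beta$: this is a Slater-type feasibility hypothesis on the guaranteed rate $\beta$ relative to the clients' availability statistics and the budget $m$ (at minimum requiring $N\beta\le m$), which I would state as a standing assumption. The second is the uniform boundedness $\bar{\tau}_{t,n}\le LS$ used to cap the penalty slack by $VLS$; since it rests on Lemma \ref{confidence bound}, which holds only with probability at least $1-\delta$, the clean constant-drift bound lives on that high-probability event, so I would either argue on this event or absorb its complement using the trivial pathwise increment $Z_{t+1,n}\le Z_{t,n}+\beta$ (immediate from (\ref{Z queue})), which keeps $\mathbb{E}[\mathcal{L}(\boldsymbol{\Theta}(T))]$ linear in $T$ regardless.
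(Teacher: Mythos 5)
Your overall strategy coincides with the paper's: both proofs establish mean rate stability by showing $\mathbb{E}[\mathcal{L}(\boldsymbol{\Theta}(T))]=\mathcal{O}(T)$ through a per-round drift bound obtained by comparing RBCS-F's minimizer of the \textit{P4} objective against a stationary randomized ($\omega$-only) policy that meets the target rate $\beta$ in expectation, then telescoping and applying Jensen's inequality. Where you genuinely differ is in how the $V$-weighted penalty terms are disposed of. The paper keeps $V\,\mathbb{E}[\max_n\{x_{t,n}\bar{\tau}_{t,n}\}]$ on both sides, invokes its Lemma \ref{w-only} (Neely's Theorem 4.5) with $\delta'\to 0$, and cancels the penalty sums at the end by asserting that the constrained-optimal policy's objective is no larger than RBCS-F's; this yields the $V$-free bound $\mathbb{E}[\mathcal{L}(\boldsymbol{\Theta}(T))]\le T\Gamma$. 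You instead discard the penalty early and absorb the optimality slack into a pathwise constant $VLS$ via Lemma \ref{confidence bound} and Cauchy--Schwarz, yielding $(\Gamma+VLS)T$. Your variant is more self-contained: it avoids the paper's comparison with the constrained optimum (a step that implicitly presupposes RBCS-F is feasible for the very constraint being proven) and the law-of-large-numbers interchange, and it makes the Slater-type feasibility hypothesis on $\beta$ explicit where the paper leaves it buried in the citation of Neely. The price is a $V$-dependent constant (harmless, since $V$ is fixed) and a dependence on the confidence event of Lemma \ref{confidence bound}.

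That last dependence is the one place your argument is not yet airtight. On the complement of the confidence event you fall back on $Z_{t+1,n}\le Z_{t,n}+\beta$, which gives $Z_{T,n}\le T\beta$ pathwise and hence contributes up to $\tfrac{1}{2}T^2\beta^2\delta$ per queue to $\mathbb{E}[\mathcal{L}(\boldsymbol{\Theta}(T))]$ --- quadratic, not linear, in $T$, contrary to your claim that the complement "keeps $\mathbb{E}[\mathcal{L}(\boldsymbol{\Theta}(T))]$ linear in $T$ regardless." The resulting estimate $\mathbb{E}[Z_{T,n}]/T\le\sqrt{2C/T+\beta^2\delta}$ tends to $\beta\sqrt{\delta}$ rather than to $0$, and $\delta$ cannot be sent to $0$ after the fact because it is baked into the exploration parameter $\alpha_t$ of the algorithm that was actually run. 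So as written you obtain mean rate stability only up to an additive $\mathcal{O}(\sqrt{\delta})$ term (equivalently, a statement holding on an event of probability $1-\delta$), whereas Theorem \ref{mean rate stable} claims it unconditionally for any $V$. To close the gap you would need either a deterministic upper bound on $\bar{\tau}_{t,n}$ that does not route through Lemma \ref{confidence bound}, or the paper's cancellation device, which sidesteps bounding $\bar{\tau}_{t,n}$ altogether.
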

\begin{proof}
The complete proof is given in Appendix \ref{Proof4}.
\end{proof}
\subsubsection{Impact of $V$}
In light of Theorem \ref{regret bound}, it seems quite reasonable for us to set the penalty factor $V$ as large as possible so as to eliminate the first term in the regret upper bound. Such an extreme setting seems even more attractive regarding the fact that the long-term fairness constraint holds under any setting of $V$, which is justified by Theorem \ref{mean rate stable}. Although a large value of $V$ could indeed bring us a more satisfying long-term model exchange time while satisfying the long-term fairness constraint, we must claim here that the fairness factor is \textit{not} impervious to the setting of $V$. Note that our long-term fairness constraint is built on the premise that the training rounds are infinite, but this may not be true in real training. With a larger $V$, the fairness queue will have a slower rate to converge, indicating that fairness could not be well guaranteed before convergence. When the training rounds are finite, the number of rounds that need to undergo before convergence could compromise some degrees of fairness. Such an analysis could be verified by our experiment results that we are now going to display.
\section{Experiments}
In this section, we present the detail of our experiments. In the first sub-section, we would explain the general setting of our simulation environment and evaluate the numerical performance of our proposed solutions. The numerical evaluation results could well explain the relationship between the penalty factor ($V$), fairness (reflected by the queue length), and efficiency guarantee (the time span of a federated round). Then we will move on to the evaluation of the real training of two iconic public datasets, CIFAR-10 and fashion-MNIST, both of which are evaluated under different settings of non-iid extent. The real-data experiment will show how our proposed RBCS-F impacts the training efficiency and final model performance (i.e. accuracy).
\subsection{Numerical Simulation}
\subsubsection{Simulation setting}
In our simulation, we assume the model exchange time conforms to the linear formulation as shown in Equation (\ref{time model}). To simulate a heterogeneous system with clients of different computation and communication capacity, we equally divide the total number of 40 clients into 4 classes and accordingly endow disparate abilities to them. For clearness, one can check Table \ref{setting of theta} for the inherent training setting of different classes of clients.
\par For the context generation (in order to simulate the per-round status of clients), we assume the allocated bandwidth of all clients is sampling from a uniform distribution between $[2,4]$MHz and the model size M is fixed to $20$Mb. Likewise, the available computation capacity of all clients is also sampling from the same uniform distribution within $[50\%,200\%]$. The indicator $s_{t,n}$ is set according to the training decision in the last round. In addition, for the noise in our linear formulation, we draw $\epsilon$ from a conditional uniform distribution within $(-\bm c_{t,n} ^{\top} \bm \theta_n^*,\bm c_{t,n} ^{\top} \bm \theta_n^*)$. The availability of clients follows the same Bernoulli distribution with parameter 0.8, and the setting of other algorithm related parameters could be found in Table \ref{Parameters setting}.  In our simulation, we mainly compare RBCS-F with two baseline selection methods that are commonly used in the field, i.e. random and FedCS\cite{nishio2019client}. Note that we have made an adaption to FedCS in order to accommodate it to our context, but the basic idea is the same as the vanilla one, which is to select as much as clients within a fixed deadline. More concretely, we allow FedCS to have full access to both the contextual features and the static coefficient factor. With the additional information, its strategy is to select all the clients that possess an expected training time (i.e. $\bm c_{t,n} ^{\top} \bm \theta_{n}^*$) that shorter than the pre-set deadline. 
\if\arxiv0 \begin{spacing}{0.3}  \fi
\begin{table}
 \small
\caption{Inherent setting of arms (or clients)}
\label{setting of theta}
\centering
\begin{tabular}{cccc}
\hline
 client & $\tau_n^{b}$ & $\tau_n^s$ &$\eta$ \\ 
  class & & (cold start time) &$\log(1+SNR)$ \\ \hline
1 &1s&1s&$\log(1+1000)$\\\hline
2 &2s &1s&$\log(1+100)$\\\hline
3&3s &1s&$\log(1+10)$\\\hline
4 &4s &1s&$\log(1+1)$\\\hline
\end{tabular}
\end{table}
\if\arxiv0 \end{spacing}\fi

\if\arxiv0 \begin{spacing}{0.3}\fi
\begin{table}
 \small
\caption{Parameters setting}
\label{Parameters setting}
\centering
\begin{tabular}{ccc}
\hline
notation& meaning & value\\  \hline
$\beta$ &guaranteed participating rate &0.15\\\hline
$m$ &maximum selected clients &8\\\hline
$\lambda$&parmeters for ridge regression &1\\\hline
$\alpha_{t}$ &exploration factor &0.1\\\hline
\end{tabular}
\end{table}
\if\arxiv0 \end{spacing}\fi
\subsubsection{Numerical performance evaluation}
In our first evaluation, we show the variation of queue status for RBCS-F under different values of penalty factor $V$. As shown in Fig. \ref{queue status}, where RBCS-F($x$) is abbreviated for RBCS-F with a penalty of $V=x$, it is interesting to see that all the curves with different settings of $V$ flatten after going through a number of scheduling rounds. This phenomenon can justify our conclusion of the mean rate stability of the queues, which indicates that they could not grow to infinity and break our fairness constraint. Another observation we can derive here is that the curve with a higher penalty factor (i.e. $V$) seems to have a slower convergence speed and a higher convergence value. This implies that a large value of $V$ might sacrifice a few fairness before its convergence, although it does conform to the long-term fairness constraint. Such an observation is consilient with our explanation given in the remark below Theorem \ref{fairness queue theorem} and our theoretical analysis in the last section.
{\vspace{-0.2cm}
\begin{figure}[!hbtp]
\centering
\setlength{\abovecaptionskip}{0.cm}
\includegraphics[width=2.5in]{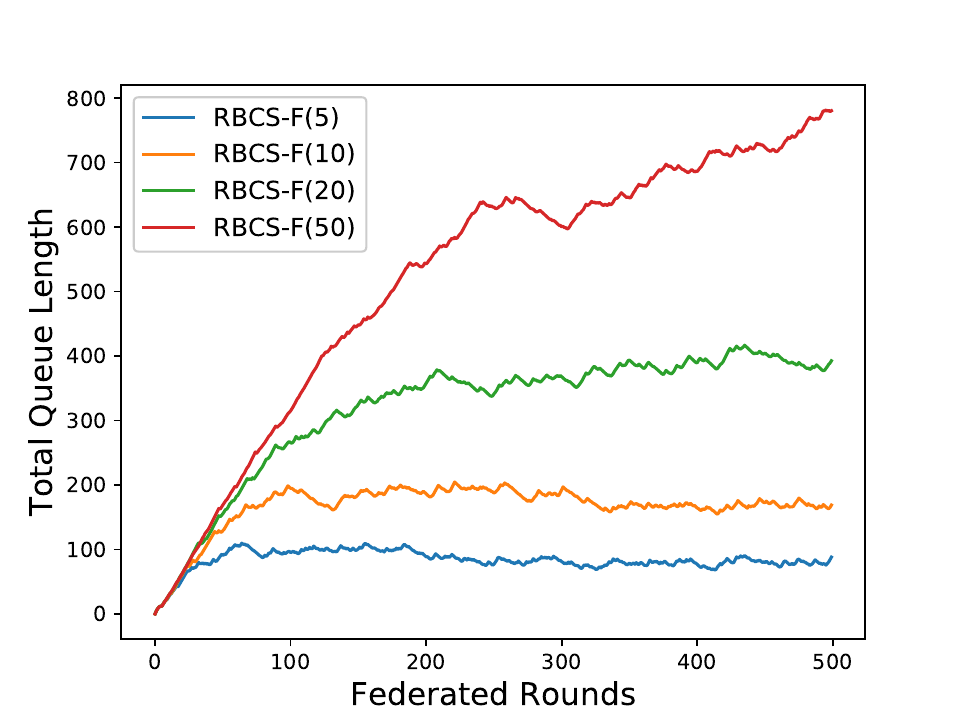}
\caption{The impact of $V$ on the convergence of queues}
\label{queue status}
\end{figure}
}
\par
Now we take a look at the evolution of training time across scheduling rounds. In Fig. \ref{time_training}, we depict the time consumption of our proposed RBCS-F with different $V$, and that of the random strategy and FedCS(3) \footnote{For FedCS(3), we set its deadline (one of its key parameter) to 3s. The specific setting allows us to make the number of its selection clients approximates to 8, which is exactly the selection number of other strategies (see our setting in Table \ref{Parameters setting}). }. As depicted, RBCS-F seems to have a satisfying enhancement in reducing the training time, compared with the random scheme, and of the same number of federated rounds, RBCS-F with a higher $V$ boasts a shorter time consumption. In addition, it is interesting to see that there is a performance gap between RBCS-F and FedCS(3). We note that this gap is inevitable due to our introduction of the fairness factor and the cost of online learning, but the bound itself is well-defined by our analysis of the regret. 
\begin{figure}[h]
\setlength{\abovecaptionskip}{0.cm}
\centering
\includegraphics[width=2.5in]{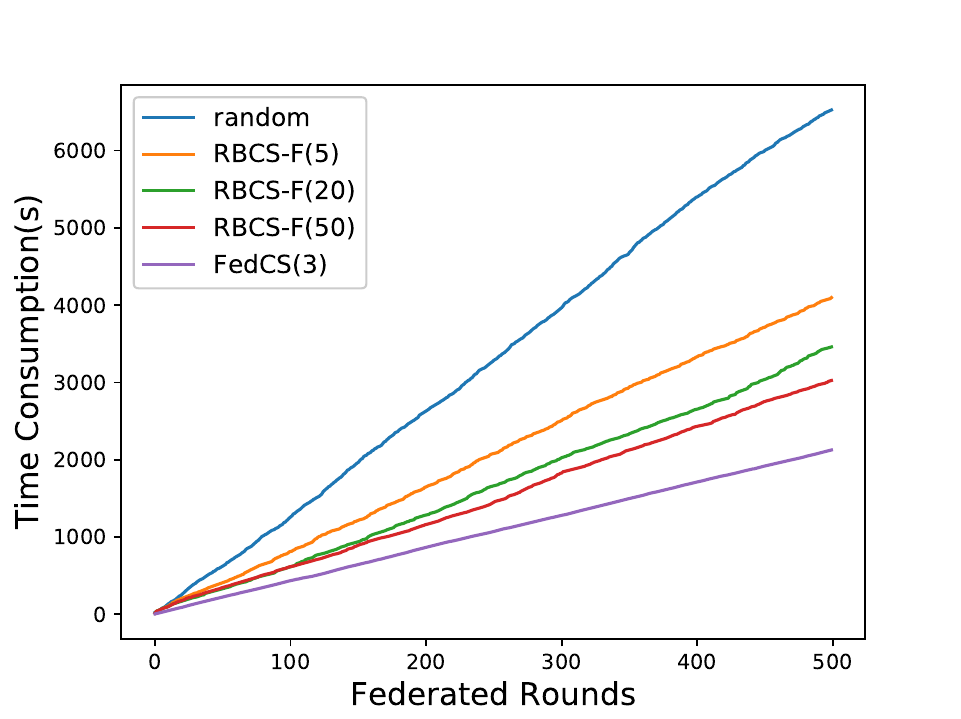}
\caption{Training time of different client-selection strategies}
\label{time_training}
\end{figure}
\par
\begin{figure}[!hbtp]
\setlength{\abovecaptionskip}{0.cm}
\centering
\includegraphics[width=2.5in]{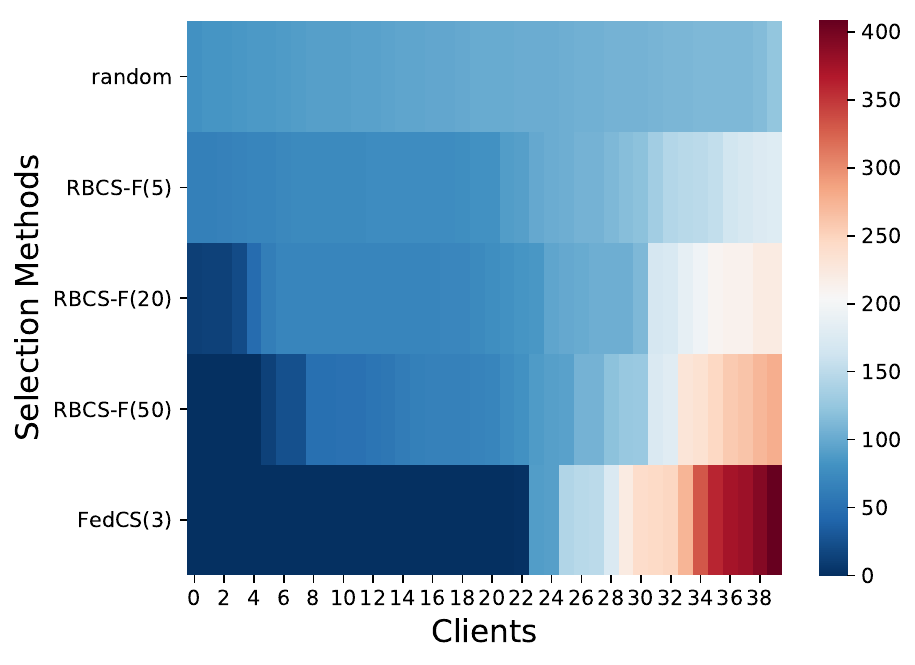}
\caption{Pull record of arms (or clients) under different client-selection strategies }
\label{pull record}
\end{figure}
The variance in training time of different schemes could be alternatively explained by looking at Fig. \ref{pull record}. This figure depicts the pull number of different arms (or chosen times of FL clients) after going through 500 rounds of decision, in which the clients are sorted in ascending order over their pull number. The brighter color indicates a heavier pull (or more times being selected) on the corresponding arm. From Fig. \ref{pull record}, we notice that the pull number of clients could vary dramatically when $V$ is set to a high value and the unbalanced selection is more intense for FedCS(3). By contrast, the scheme that is known to be fairer (e.g. random or RBCF-F with low penalty) boasts an even distribution on the pull number, based on which we can explain why the training time of RBCS-F would escalate with a fairer selection. Clearly, the selection scheme that evenly chooses the clients shall never match up with those always choosing the fastest ones. However, is it the faster the better? Does fairness matter in real training? Now we are going to explore the answers with our real training on two public datasets.
\subsection{Training on Public Dataset}
\subsubsection{Setup} 
We set up federated environment with \textit{PyTorch} (version: 1.6.0) and all the computation is conducted using a high-performance workstation (Dell PowerEdge T630 with 2x GTX 1080Ti).  We have prepared two tasks for an evaluation purpose. To be specific, we use two different Convolutional Neural Network (CNN) models to predict the classifying results from two datasets, fashion-MNIST, and CIFAR-10. For fashion-MNIST, we adopt a CNN with two 5x5 convolution layers (the first with 20 channels, the second with 50, each followed with 2x2 max pooling), a
fully-connected layer with 500 units and ReLu activation, and finally a softmax output layer. For CIFAR-10, which is known to be a harder task,  we use another much heavier CNN model with two 5x5 convolution layers (each with 64 channels), also followed with 2x2 max pooling, two fully connected layers with respective 384 and 192 units, and finally a softmax output layer.\par
In addition to the general iid setting, we also explore the training performance on a non-iid one. Here we adopt the same approach as in \cite{hsu_measuring_2019} to synthesize non-identical client data. More specifically, we uniformly sample $q_i \times 500$ items from each of the classifying class, where $\bm q \triangleq (q_1,q_2,\dots,q_i)$ is drawn from a Dirichlet distribution, i.e., $\boldsymbol{q} \sim \operatorname{Dir}(\gamma_1 \boldsymbol{p})$. Here $\boldsymbol{p}$ is an all-1 10-dimension vector   \footnote{Both Cifar and fashion-Mnist have 10 targets (or classes)} and $\gamma_1$ is a \textit{concentration} parameter controlling the extent of  identicalness among clients, say, with $\gamma_1 \rightarrow 0$ each client holds only one class chosen at random (i.e. high degree of non-iid), conversely, all clients have identical access to all classes (i.e. approximates to iid) if $\gamma_1 \rightarrow \infty$.

\subsubsection{Impact of fairness}
\begin{figure*}[!hbtp]
\centering
\subfloat[$\gamma_1=1$]{\includegraphics[width=2.4in]{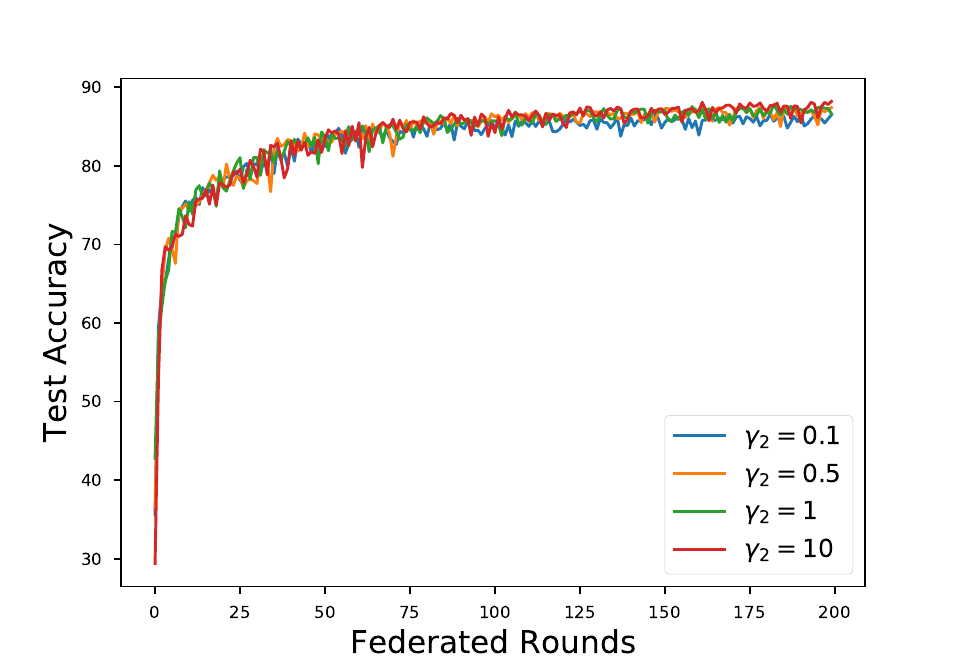}%
}
\subfloat[$\gamma_1=10$]{\includegraphics[width=2.4in]{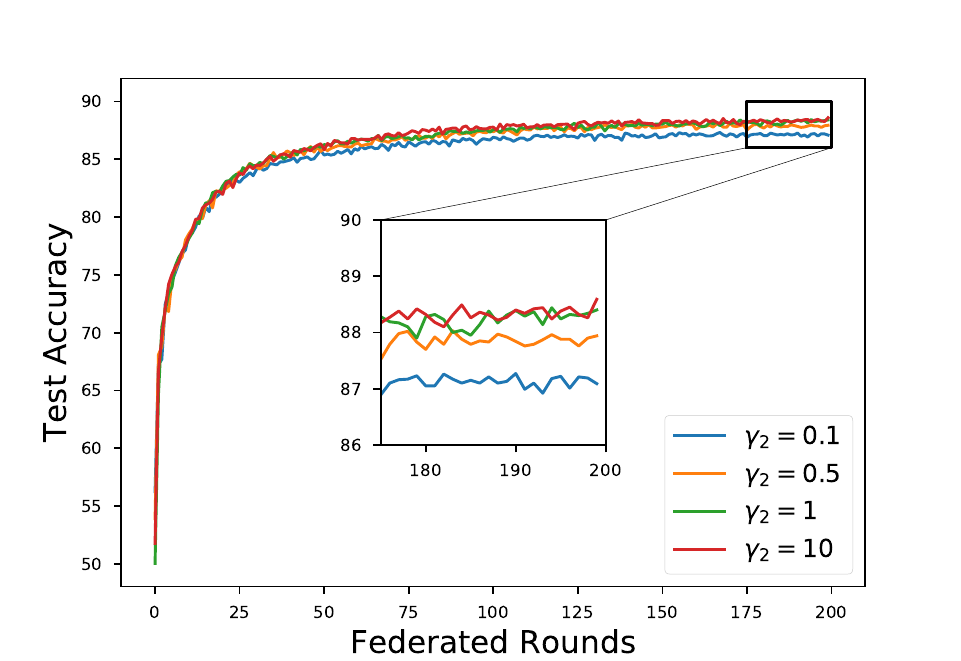}%
}
\subfloat[iid (approximates to $\gamma_1=\infty$ )]{\includegraphics[width=2.4in]{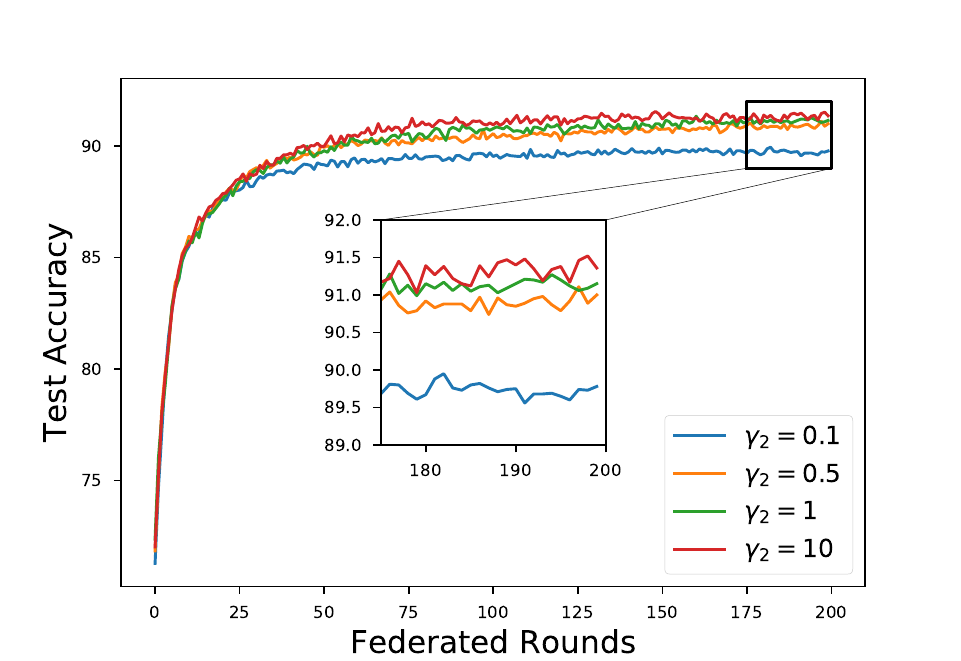}%
} \\
\subfloat[$\gamma_1=1$]{\includegraphics[width=2.4in]{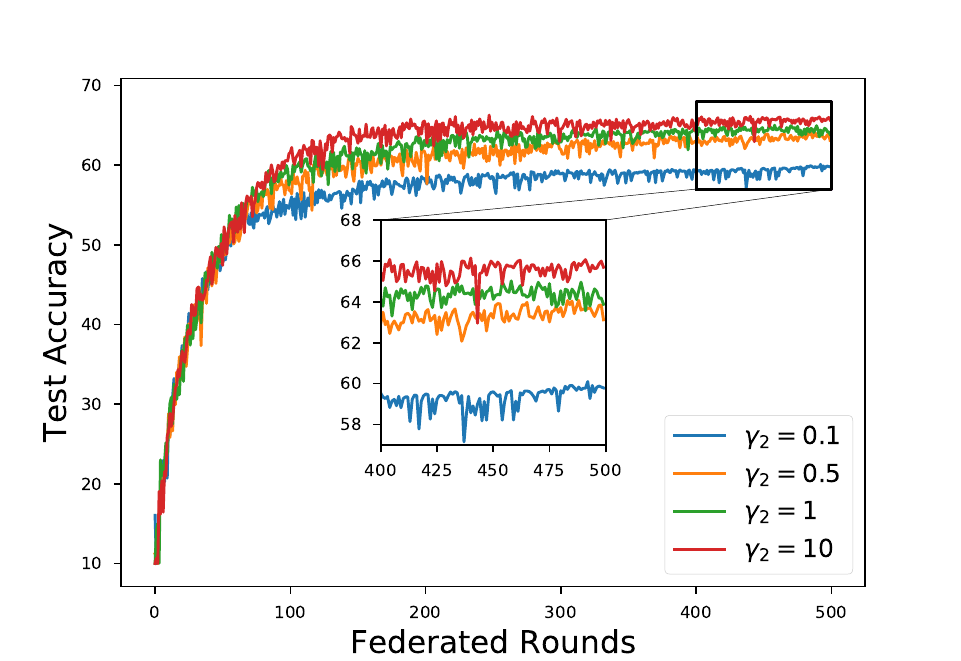}%
}
\subfloat[$\gamma_1=10$]{\includegraphics[width=2.4in]{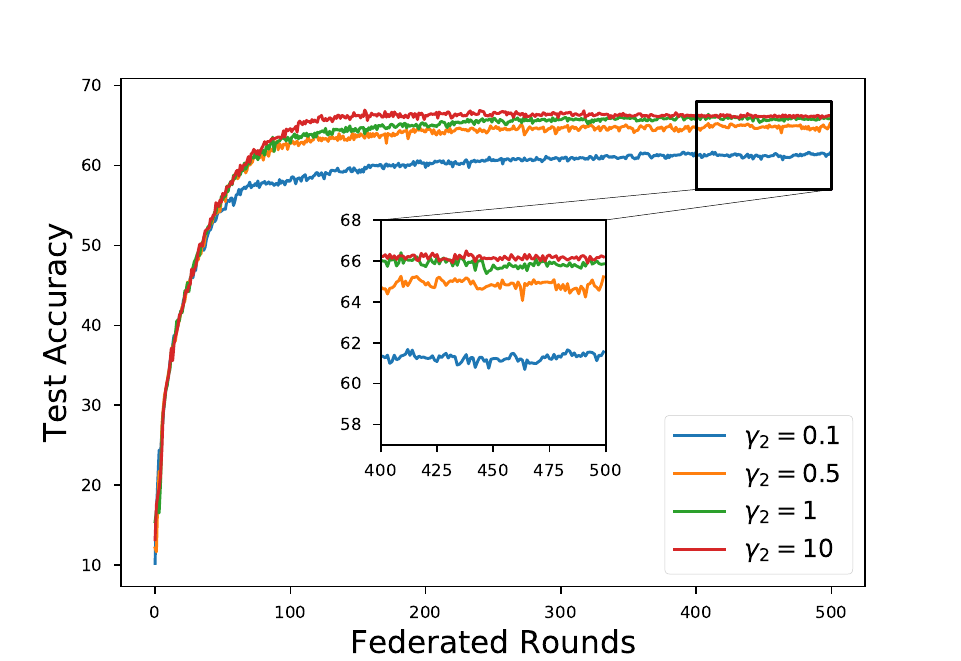}%
}
\subfloat[iid (approximates to $\gamma_1=\infty$ )]{\includegraphics[width=2.4in]{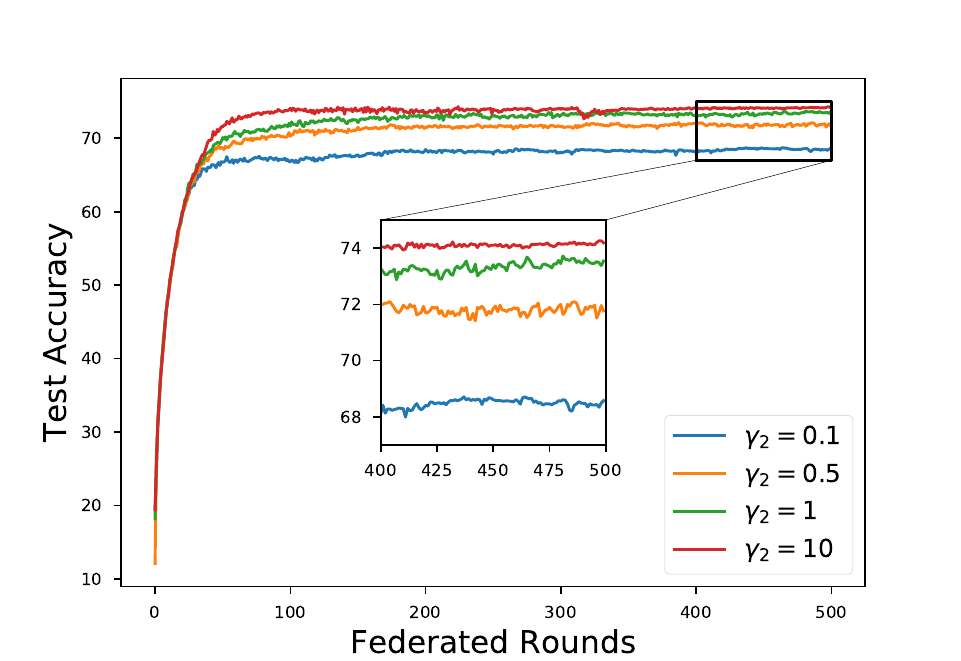}%
}
\caption{Fairness impact under fashion-MNIST ( (a), (b) and (c) ) and CIFAR-10 ( (d), (e) and (f) )}
\label{fairness impact}
\end{figure*}
In order to quantify the fairness factor and investigate how the factor affects the model accuracy as well as the training efficiency, we thereby introduce $\gamma_2$ to indicate the extent of fairness. Analogically to how we quantify the non-iid extent, we draw$\boldsymbol{q}$ from a Dirichlet distribution, i.e., $\boldsymbol{q}^{\prime} \sim \operatorname{Dir}(\gamma_2 \boldsymbol{p^{\prime}})$ and then $\boldsymbol{q}^{\prime} $ is serving as the probability vector that we use to randomly select clients in each round. As we note before, a smaller value of \textit{concentration} parameter $\gamma_2$ leads to a higher variation of $\boldsymbol{q}^{\prime}$ and thereby causing greater unbalance in selection. Fig. \ref{fairness impact} show how the model accuracy evolve with different $\gamma_2$, under different non-iid extent (given by $\gamma_1$). Among which, subfigures (a), (b),  and (c) depict that of the training for fashionMnist, where we can see that a higher $\gamma_2$ (a fairer selection) boasts a higher final model accuracy. Also, a similar observation, or an even more conspicuous one, can be found in our training for CIFAR-10, as indicated in subfigures (d), (e), and (f). From our result, it appears that the fairness factor might have different degrees of influence for the training of different datasets. More radically, we are in fact guessing that fairness factor would play a more critical role in a more complicated task. Our theory is that training of a harder task might require more diversified data (in terms of both targets and features), and corresponding, the relative information that each piece of data contains would reduce, and thereby, the training of those tasks should better involve as much available data as possible (i.e. better to be fair), so as to improve the model performance (specifically, final accuracy). \par
On the other hand, although the experimental data does demonstrate a profound impact of non-iid extent on the model stability and convergence speed during training (as we can observe in Fig. \ref{fairness impact} that when $\gamma_1$ decreases, more jitters on the curve and more rounds underwent before convergence), it does not explicitly show the fairness factor (as reflected by $\gamma_2$) being more or less influential with the change of $\gamma_1$, which seems to tell us that our defined non-iid extent  has little or no impact on the effect of fairness.

\par
%
\subsubsection{Accuracy vs. federated round}
\label{Accuracy vs. federated round}
\begin{figure*}[!hbtp]
\centering
\subfloat[$\gamma_1=1$]{\includegraphics[width=2.4in]{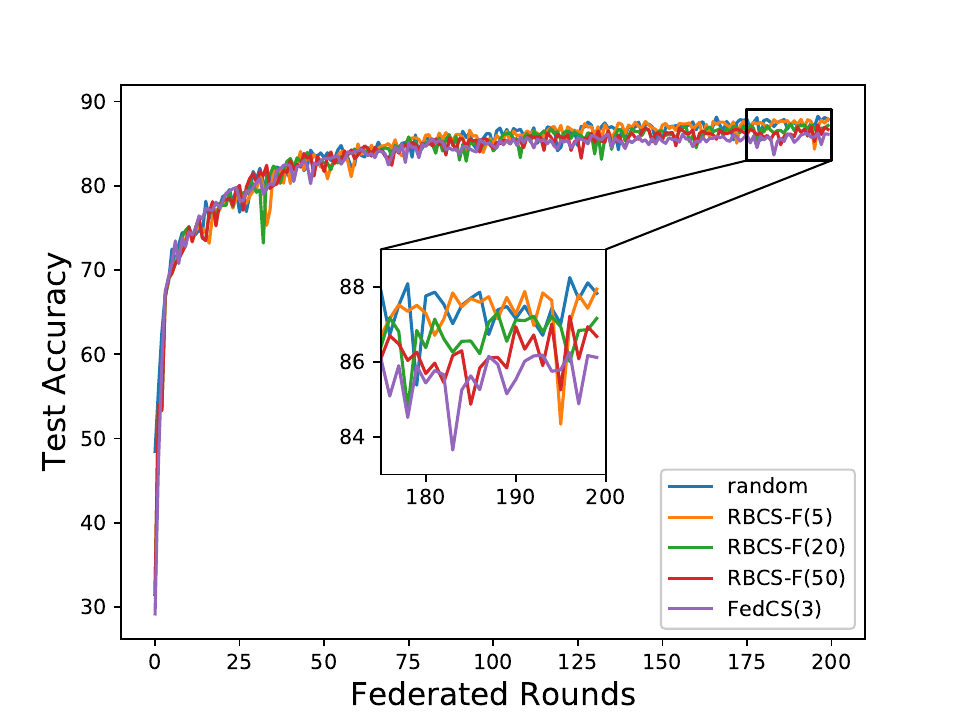}%
}
\subfloat[$\gamma_1=10$]{\includegraphics[width=2.4in]{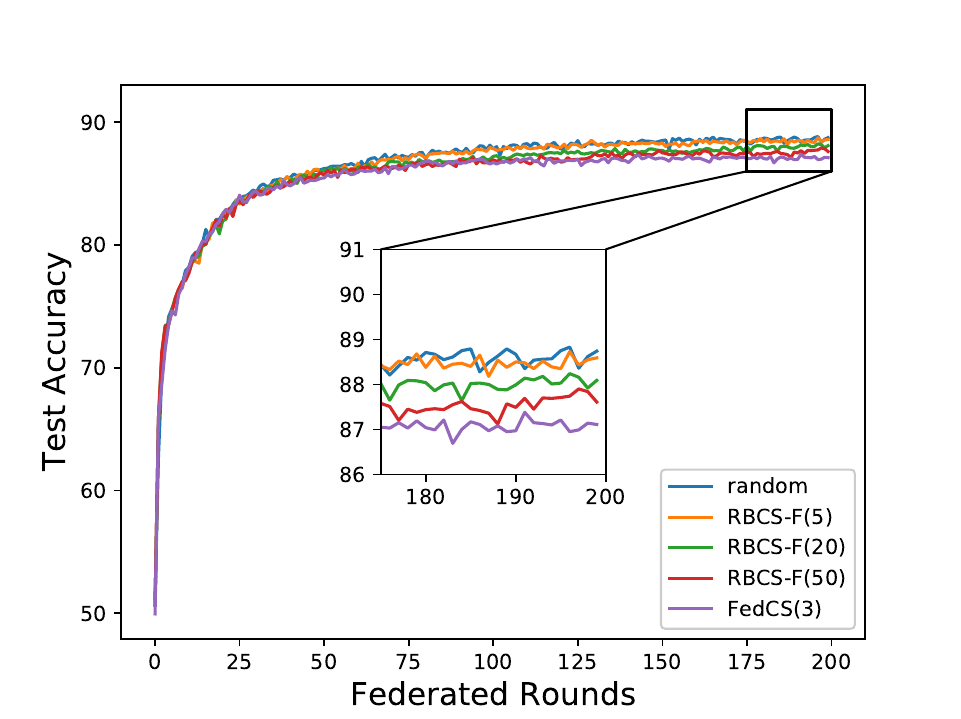}%
}
\subfloat[iid (approximates to $\gamma_1=\infty$ )]{\includegraphics[width=2.4in]{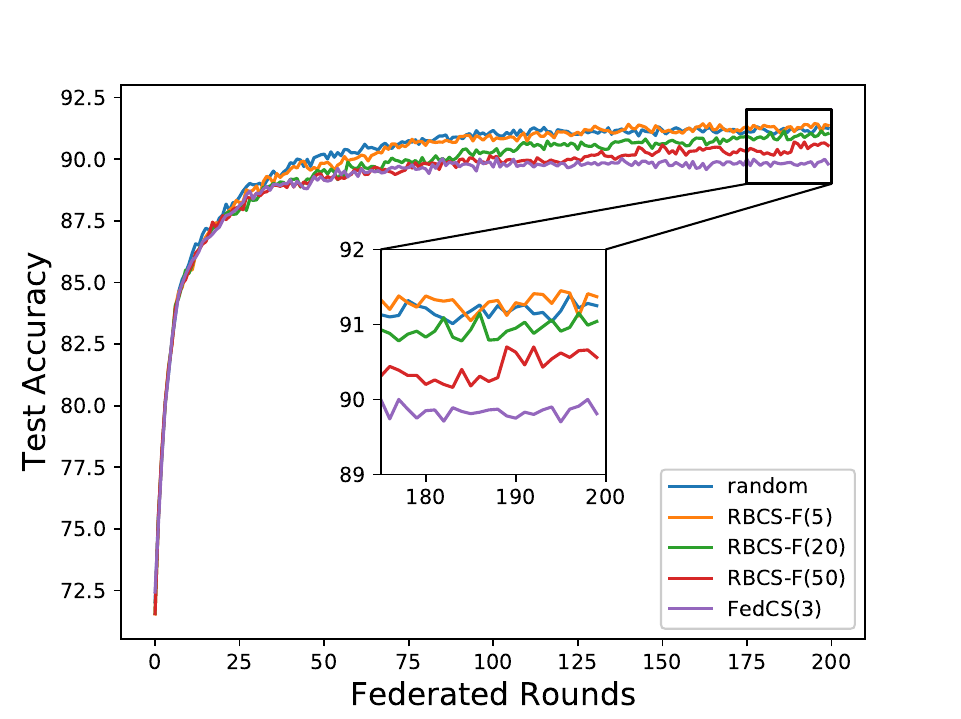}%
}\\
\subfloat[$\gamma_1=1$]{\includegraphics[width=2.4in]{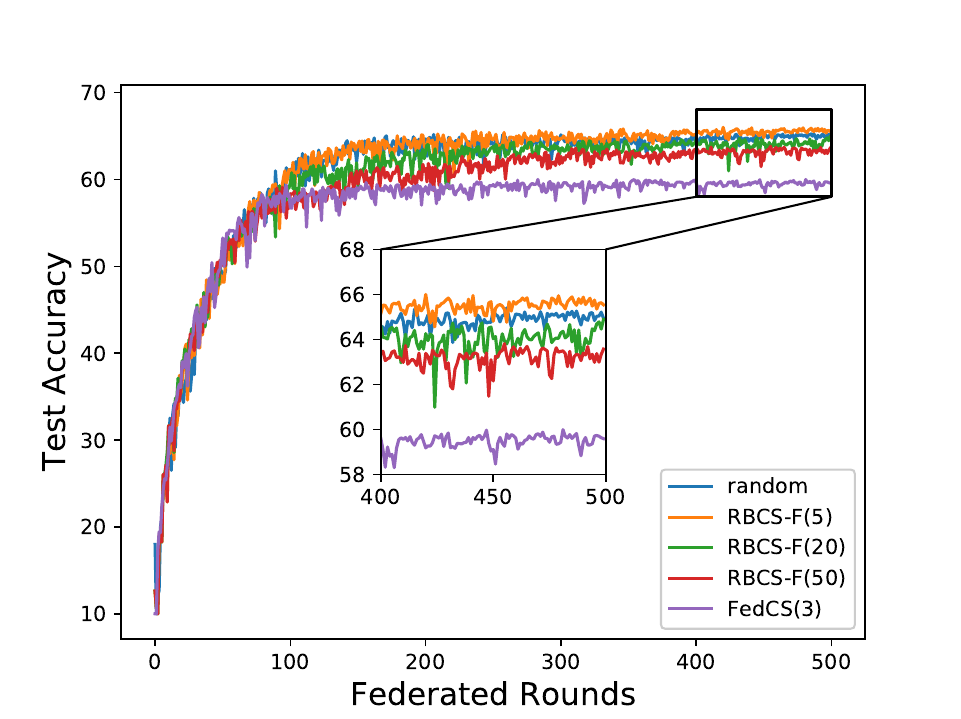}%
}
\subfloat[$\gamma_1=10$]{\includegraphics[width=2.4in]{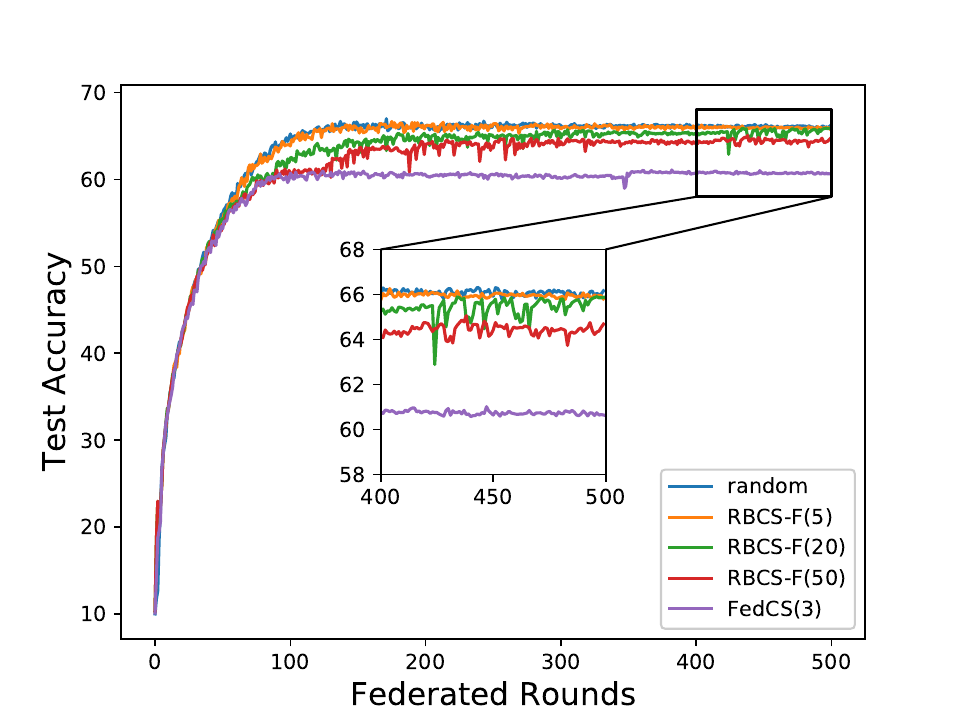}%
}
\subfloat[iid (approximates to $\gamma_1=\infty$ )]{\includegraphics[width=2.4in]{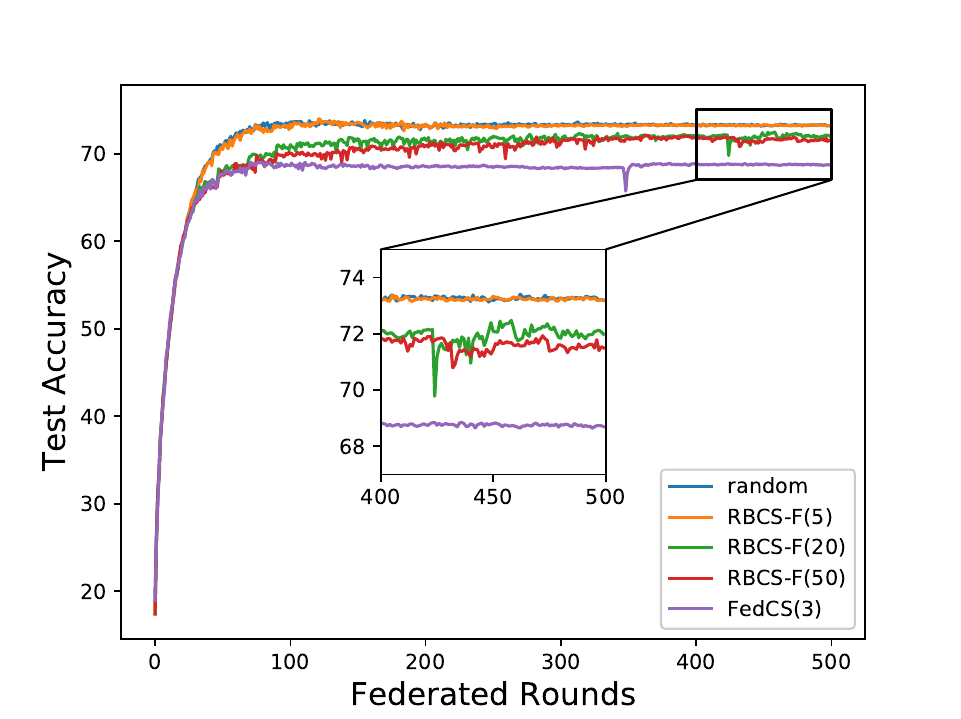}%
}
\caption{Accuracy vs. federated rounds for fashion-MNIST ( $(a), (b), (c)$ ) and CIFAR-10 ( $(d), (e), (f)$ )}
\label{acc rounds fix}
\end{figure*}
Fig. \ref{acc rounds fix} depicts how our proposed RBCS-F with different settings of $V$ performs in the real training, being compared with the baselines, random and FedCS(3). The result is consistent with our former conjecture that RBCS-F with a smaller $V$, which is known to be fairer, would achieve a higher final accuracy after rounds of training. Random, a categorically fair scheme, yields the best performance in terms of final model accuracy, while the FedCS(3), another extreme in terms of fairness, does not promise us a commensurate result. \par
Another point we are interested in is that RBCS-F with a higher penalty seems to spend more rounds to reach a certain accuracy, which we refer to a lower \textit{round efficiency}. This phenomenon can be justified by the result from Fig. \ref{queue status}, which indicates that RBCS-F with a higher penalty tends only to consider fairness when the queue length is large, or in other words, only during a big number of training rounds. Correspondingly, the delay of fairness consideration would make the global model having the chance of aggregating some seldom access data only when the number of rounds is large, and thereby, causing postpone on convergence. Besides, we also found that RBCS-F generally outperforms FedCS(3) in terms of round efficiency, which is conspicuously depicted by subfigures (e) and (f).
\subsubsection{Accuracy vs. training time}
It is also of interest to see how the strategies perform in terms of \textit{time efficiency} (i.e. the elapsed time to reach a certain accuracy). As such, we take advantage of Fig. \ref{Accurarcy vs. training time} to show our investigation on how the model test accuracy evolves according to the elapse of training time. There are a few observations we can derive here. First, a fairer scheme generally achieves a higher final accuracy (or a higher convergence value), which is not a new result as we have already corroborated it in Section \ref{Accuracy vs. federated round}. Second, in terms of time efficiency, FedCS(3) achieves an outstanding performance when accuracy is low, which critically outperforms all other schemes, while random, shown to have the worst performance. It is not surprising to see that FedCS(3) could reach the highest time efficiency in the first few rounds as it has complete access to the client-information and has no regard for the fairness factor, but we also see that our proposed algorithm RBCS-F has an acceptable performance gap compared with FedCS(3) and achieves a significant improvement compared with random. \par
Finally, it is interesting to compare the time efficiency of RBCS-F with different penalties. During our investigation, we see that a higher penalty does not necessarily bring us a higher time efficiency. Specifically, from Figs. \ref{Accurarcy vs. training time}(e) and \ref{Accurarcy vs. training time}(f), we see that RBCS-F(20) outperforms RBCS-F(50) during the whole training process. The phenomenon sounds weird as we can derive from Fig. \ref{time_training} that the average time for each round is strictly decreasing with penalty $V$. But actually, it can be explained if taking into account the result from Section \ref{Accuracy vs. federated round} that the convergence round would also increase with penalty $V$. The tradeoff between convergence round and training time of each round is what we believe to cause the unexpected phenomenon.
\begin{figure*}[!hbtp]
\centering
\subfloat[$\gamma_1=1$]{\includegraphics[width=2.5in]{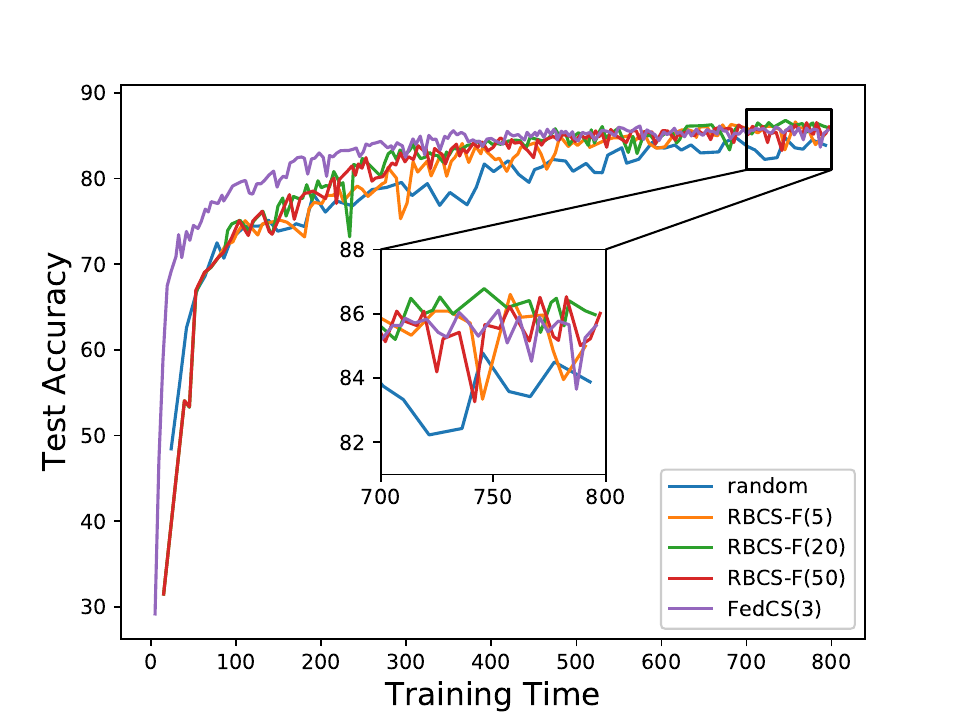}%
}
\subfloat[$\gamma_1=10$]{\includegraphics[width=2.5in]{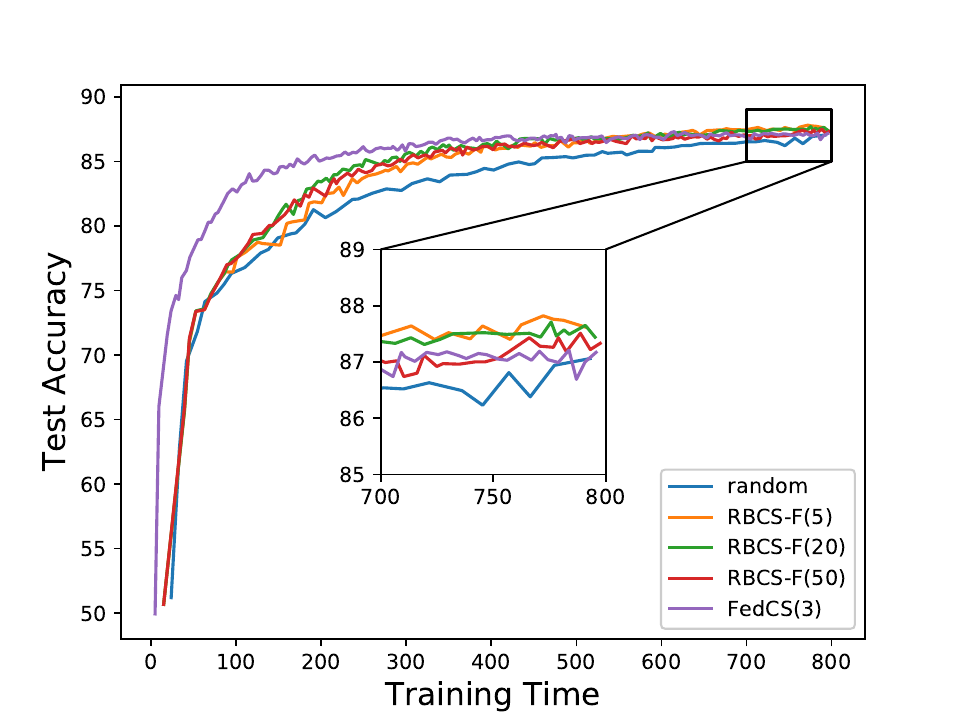}%
}
\subfloat[iid (approximates to $\gamma_1=\infty$ )]{\includegraphics[width=2.5in]{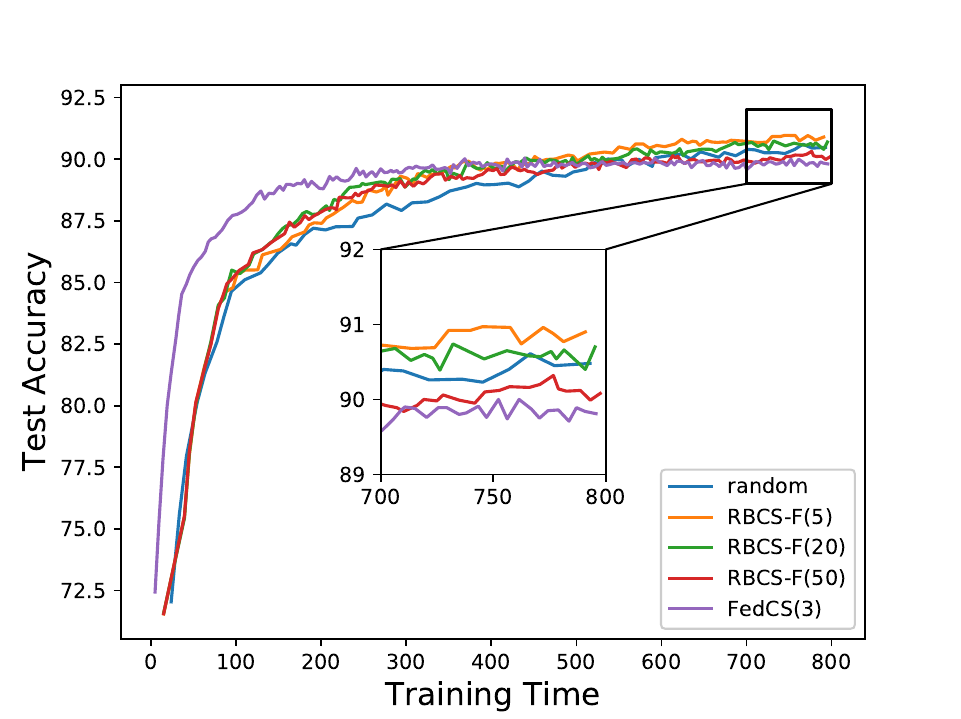}%
}\\
\subfloat[$\gamma_1=1$]{\includegraphics[width=2.5in]{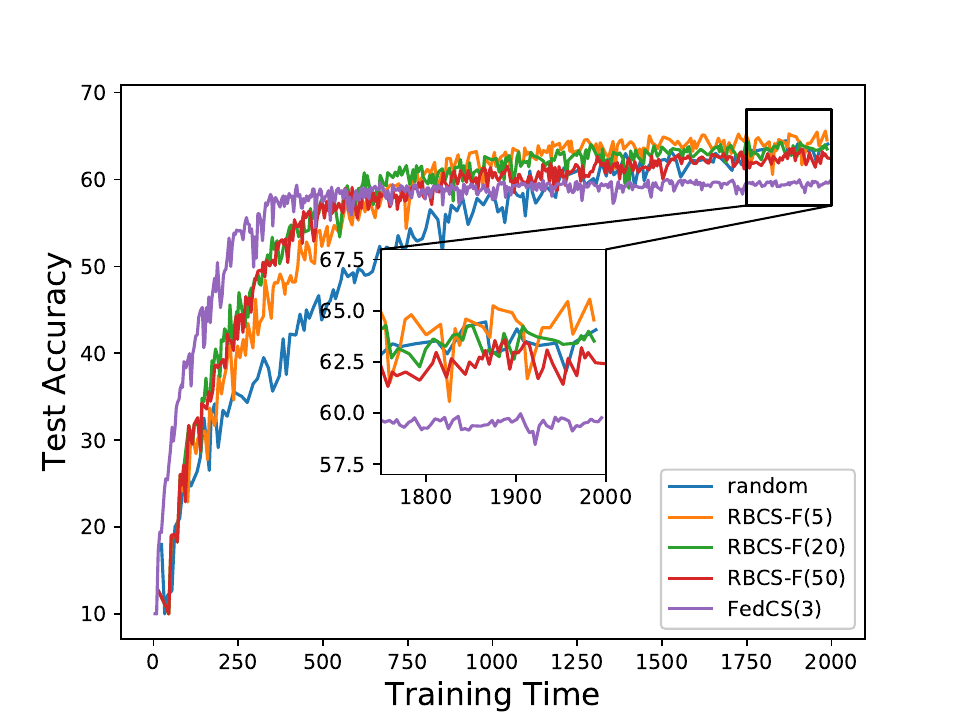}%
}
\subfloat[$\gamma_1=10$]{\includegraphics[width=2.5in]{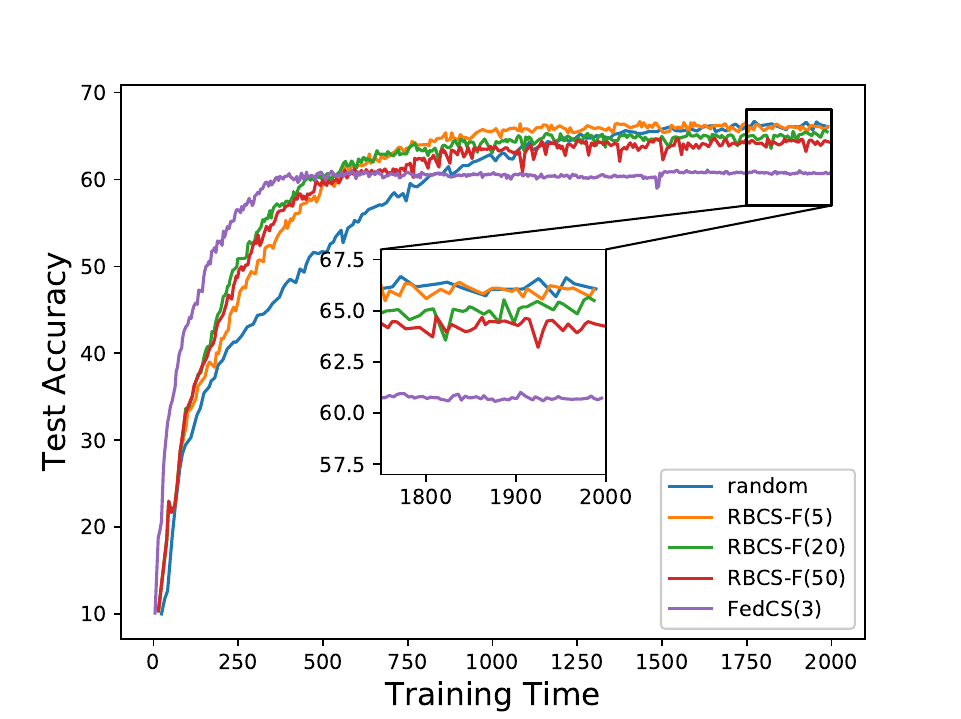}%
}
\subfloat[iid (approximates to $\gamma_1=\infty$ )]{\includegraphics[width=2.5in]{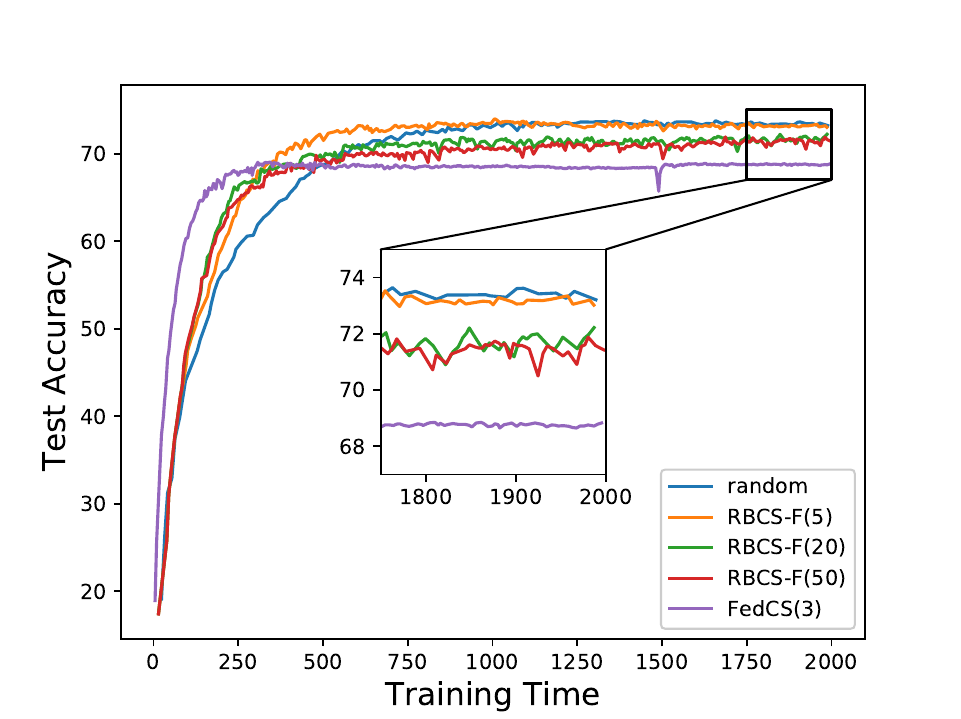}%
}

\caption{Accuracy vs. training time for fashion-MNIST ( $(a), (b), (c)$ ) and CIFAR-10 ( $(d), (e), (f)$ )}
\label{Accurarcy vs. training time}
\end{figure*}
\section{Conclusion and Future Prospect}
In this paper, we have investigated the client selection problem for federated learning. Our concern mainly focuses on the tradeoff between fairness factor and training efficiency. In light of the experiment on our proposed method, we found that fairness is indeed playing a critical role in the training process. In particular, we show that a fairer strategy could promise us a higher final accuracy while inevitably sacrificing a few training efficiency. In terms of how the fairness factor would affect the final achieved accuracy, as well as the convergence speed, however, we could not figure out a rigorous way to quantify their relation. And neither could we track down from the existing literature any theoretical analysis of the fairness factor for FL, making this particular issue quite worthy of investigation. Our future effort would be mainly on this emerging issue.

\small{
\section*{Acknowledgment}
This work is supported by National Natural Science Foundation of China (Grant Nos. 61872084, 61772205), Guangzhou Science and Technology Program key projects (Grant Nos. 202007040002 and 201902010040, Guangdong Major Project of Basic and Applied Basic Research (2019B030302002).
}

\ifCLASSOPTIONcaptionsoff
  \newpage
\fi



%
\bibliographystyle{IEEEtran}
 \tiny
\bibliography{RBCS-F}{}
%

%
%
%



\newpage
\if\arxiv0 \normalsize
\appendices
\section{Proof of Theorem \ref{first upper bound for drift plus cost} }
\label{Proof1}
\begin{proof}
Taking square of  (\ref{Z queue}), we have 
 \begin{align}
\begin{split}
 Z^2_{t+1,n} = Z^2_{t,n}+2Z_{t,n}\left(\beta-x_{t,n} \right)+  \left(\beta-x_{t,n}\right)^2
\end{split}
\end{align}
Then the difference between $\frac{1}{2}Z^2_{t+1,n}$ and $\frac{1}{2}Z^2_{t,n}$ becomes:
\begin{align}
\label{Z difference}
 \begin{split}
&\frac{1}{2}\left(Z_{t+1,n}^2-Z_{t,n}^2\right)\\&= \frac{1}{2}\left(\beta-x_{t,n}\right)^2+Z_{t,n}(\beta-x_{t,n} )  \\ 
&\leq \frac{1}{2}\left( x_{t,n}^2+ \beta^2 \right)+Z_{t,n}\left(\beta-x_{t,n}\right)\\
&\overset{(a)}\leq \frac{1}{2} ( 1+\beta^2 )+Z_{t,n}\left(\beta-x_{t,n}\right)
\end{split}
\end{align}
Among which, (a) is valid since $x_{t,n}\in\{0,1\}$, trivially we have $x_{t,n}^2\leq 1$. 

Now combining (\ref{lyapunov function}), (\ref{drift}) and (\ref{Z difference}), it yields: 
\begin{align}
\label{drift bound}
 \begin{split}
\Delta(\Theta(t)) \leq & \Gamma+ \sum_{n\in\mathcal{N}} Z_{t,n} \mathbb{E}[  \beta-x_{t,n} |  \boldsymbol{\Theta}(t)  ]\\
\end{split}
\end{align}

where  $\Gamma= N\left(1+\beta^2 \right)/2 $.
\par
Plugging $V \mathbb{E}[  f(\mathcal{S}_t,\boldsymbol{\tau}_t) | \boldsymbol{\Theta}(t)) ] $ into (\ref{drift bound}), it can smoothly transform to the form in (\ref{drift plus cost bound}). This completes the proof.
\end{proof}

\section{Proof of Lemma \ref{confidence bound}}
\label{Proof2}
 We first give Lemma \ref{theta's confidence bound} as a fundamental preliminary, and then we will show our justification of Lemma \ref{confidence bound}.
 \begin{lemma} [\cite{abbasi2011improved}, \text { Theorem } 2] \label{theta's confidence bound}  Assume that $\epsilon_t $ is conditionally $R$-sub-Gaussian for $R \geq 0$, $\left\|\bm\theta_{n}^{*}\right\|_{2} \leq$
$S$ and $\left\|\mathbf{c}_{t,n}\right\|_{2} \leq L$ for all $t \geq 1$ and $n \in \mathcal{N}$. Here $S$ and $L$ are both positive finite constants. Define $\mathbf{H}_{T,n}=\mathbf{H}+\sum_{t=1}^{T} x_{t,n} \mathbf{c}_{t,n} \mathbf{c}_{t,n}^{\top}$ and $\operatorname{set} \mathbf{H}=\lambda \mathbf{I}$.
Then, with probability at least $1-\delta,$ for all rounds $t \geq 1$, $\hat{\bm \theta}_{t,n}$ satisfies:
\begin{equation}
\left\|\hat{\bm\theta}_{t,n}-\bm\theta_{n}^{*}\right\|_{\mathbf{H}_{t-1,n}} \leq  R \sqrt{3 \log \left(\frac{1+t L^2 / \lambda}{\delta}\right)}+\lambda^{1 / 2} S
\end{equation}
where  we denote $\|\mathbf{b}\|_{\mathbf{M}} \triangleq \sqrt{\mathbf{b}^{T} \mathbf{M} \mathbf{b}}$. $\mathbf{b}$ is a vector and $\mathbf{M}$ is a
positive definite matrix.
\end{lemma}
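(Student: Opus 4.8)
The statement to establish is the self-normalized confidence ellipsoid for online ridge regression, quoted from \cite{abbasi2011improved}. Since it is invoked as a black box, the plan is to reconstruct its proof from scratch, fixing an arm $n$ (and suppressing it notationally). Writing the estimator as $\hat{\bm\theta}_{t,n}=\mathbf H_{t-1,n}^{-1}\mathbf b_{t-1,n}$ with $\mathbf b_{t-1,n}=\sum_{s<t}x_{s,n}\tau_{s,n}\mathbf c_{s,n}$, I would first substitute $\tau_{s,n}=\mathbf c_{s,n}^\top\bm\theta_n^*+\epsilon_{s,n}$ and introduce the aggregated noise $\mathbf s_{t,n}\triangleq\sum_{s<t}x_{s,n}\epsilon_{s,n}\mathbf c_{s,n}$, which gives the decomposition
\begin{equation}
\hat{\bm\theta}_{t,n}-\bm\theta_n^*=\mathbf H_{t-1,n}^{-1}\mathbf s_{t,n}-\lambda\mathbf H_{t-1,n}^{-1}\bm\theta_n^*.
\end{equation}
Using $\|\mathbf M^{-1}\mathbf v\|_{\mathbf M}=\|\mathbf v\|_{\mathbf M^{-1}}$ and the triangle inequality in the $\mathbf H_{t-1,n}$-norm, this bounds $\|\hat{\bm\theta}_{t,n}-\bm\theta_n^*\|_{\mathbf H_{t-1,n}}$ by $\|\mathbf s_{t,n}\|_{\mathbf H_{t-1,n}^{-1}}+\lambda\|\bm\theta_n^*\|_{\mathbf H_{t-1,n}^{-1}}$. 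The regularization-bias term is immediate: $\mathbf H_{t-1,n}\succeq\lambda\mathbf I$ forces $\|\bm\theta_n^*\|_{\mathbf H_{t-1,n}^{-1}}\le\lambda^{-1/2}\|\bm\theta_n^*\|_2\le\lambda^{-1/2}S$, so that term is at most $\lambda^{1/2}S$, already matching the additive constant in the claim.

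The crux is the data-dependent, time-uniform control of the noise term $\|\mathbf s_{t,n}\|_{\mathbf H_{t-1,n}^{-1}}$, which I would attack with the method of mixtures. For each fixed $\bm\mu\in\mathbb R^3$ define
\begin{equation}
M_t(\bm\mu)=\exp\!\left(\sum_{s<t}x_{s,n}\Big[\tfrac{\epsilon_{s,n}\langle\bm\mu,\mathbf c_{s,n}\rangle}{R}-\tfrac{1}{2}\langle\bm\mu,\mathbf c_{s,n}\rangle^2\Big]\right).
\end{equation}
Since $\mathbf c_{s,n}$ is predictable and $\epsilon_{s,n}$ is conditionally $R$-sub-Gaussian, each per-step factor has conditional expectation at most one, so $M_t(\bm\mu)$ is a nonnegative supermartingale with $\mathbb E[M_t(\bm\mu)]\le1$. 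Integrating against the zero-mean Gaussian prior $\bm\mu\sim N(\mathbf 0,\lambda^{-1}\mathbf I)$ preserves the supermartingale property (Tonelli), and the Gaussian integral evaluates in closed form to
\begin{equation}
\bar M_t=\Big(\tfrac{\det(\lambda\mathbf I)}{\det\mathbf H_{t-1,n}}\Big)^{1/2}\exp\!\Big(\tfrac{\|\mathbf s_{t,n}\|_{\mathbf H_{t-1,n}^{-1}}^2}{2R^2}\Big).
\end{equation}
Applying Ville's maximal inequality, $\Pr(\sup_t\bar M_t\ge1/\delta)\le\delta\,\mathbb E[\bar M_0]\le\delta$, then yields with probability at least $1-\delta$, simultaneously for all $t\ge1$,
\begin{equation}
\|\mathbf s_{t,n}\|_{\mathbf H_{t-1,n}^{-1}}\le R\sqrt{2\log\!\Big(\tfrac{\det(\mathbf H_{t-1,n})^{1/2}\det(\lambda\mathbf I)^{-1/2}}{\delta}\Big)}.
\end{equation}

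It remains to replace the determinant ratio by the explicit logarithmic term and assemble the pieces. The eigenvalues of $\mathbf H_{t-1,n}$ are nonnegative and sum to $\mathrm{tr}(\mathbf H_{t-1,n})=3\lambda+\sum_{s<t}x_{s,n}\|\mathbf c_{s,n}\|_2^2\le3\lambda+tL^2$, so by AM--GM $\det\mathbf H_{t-1,n}\le(\lambda+tL^2/3)^3$, whence $\det(\mathbf H_{t-1,n})^{1/2}\det(\lambda\mathbf I)^{-1/2}\le(1+tL^2/(3\lambda))^{3/2}$. Loosening this to the slightly larger quantity $(1+tL^2/\lambda)^{3/2}$ (valid since $\delta\le1$) converts the square-root factor into $R\sqrt{3\log((1+tL^2/\lambda)/\delta)}$, and adding back the $\lambda^{1/2}S$ bias term reproduces exactly the stated confidence radius.

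The genuine difficulty is concentrated in the middle step: producing a bound that is both self-normalized (the weighting $\mathbf H_{t-1,n}^{-1}$ is itself random) and uniform over all rounds $t$ at once. A naive union bound over $t$, or a fixed-direction Azuma/Hoeffding argument, is too weak; the essential device is the pseudo-maximization over $\bm\mu$ through the Gaussian mixture, which collapses the whole family of scalar supermartingales into a single nonnegative supermartingale whose value already encodes $\|\mathbf s_{t,n}\|_{\mathbf H_{t-1,n}^{-1}}$, after which Ville's inequality delivers the time-uniform tail in one stroke. Everything else---the error decomposition, the bias term, and the determinant--trace estimate---is routine linear algebra.
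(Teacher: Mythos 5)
Your reconstruction is correct, but note that the paper itself does not prove this lemma at all: it is quoted verbatim as Theorem~2 of \cite{abbasi2011improved} and the proof is explicitly omitted, with the paper's only original content being the remark that verifies the hypotheses (conditional $R$-sub-Gaussianity of $\epsilon_{t,n}$ and the bounds $S$, $L$) in its setting. Your argument --- the error decomposition into noise and regularization-bias terms, the method-of-mixtures supermartingale with a Gaussian prior, Ville's inequality for the time-uniform self-normalized tail, and the determinant--trace estimate with $d=3$ --- is precisely the canonical proof from that reference, and each step, including the final loosening of $\bigl(1+tL^2/(3\lambda)\bigr)^{3/2}/\delta$ to $\bigl((1+tL^2/\lambda)/\delta\bigr)^{3}{}^{/2}$ using $\delta\le 1$, checks out.
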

The proof is ommited here for brevity. We refer to \cite{abbasi2011improved} for a more dedicated justification. 
\begin{remark}
Recall that $\epsilon_t $ in our formulation is assumed to be an  stochastic variable following a conditionally R-sub-Gaussian, the first assumption trivially holds. In addition, as we have assumed that there exist concrete maximum bounds for $\tau_n^{b}$, $\tau_n^s$ and $1/\eta$,  we can always find a positive $S< \infty$ making $\left\|\bm\theta_{n}^{*}\right\|_{2} \leq S$. 
 Also, we can ensure $\left\|\mathbf{c}_{t,n}\right\|_{2} \leq L$ due to the finite  volume of contexts. The above analysis justifies the applicability of Lemma \ref{theta's confidence bound} to our system model.
\end{remark}
Now we are going to show the proof of Lemma \ref{confidence bound}.
\begin{proof}
\par
From our estimation rule shown in (\ref{UCB estimation}) we can derive:
\begin{equation}
\label{min zero bound}
\begin{aligned}
& {\tau}_{t,n}^*-\bar{\tau}_{t,n}\\
=& \mathbf{c}_{t,n}^{\top}\bm \theta^{*}_{n}-\max
\left\{0,\mathbf{c}_{t,n}^{\top}\hat{\bm \theta}_{t,n}-\alpha_{t}\left\|\mathbf{c}_{t,n}\right\|_{\mathbf{H}_{t-1,n}^{-1}}\right\} \\
=& \min\left\{ \mathbf{c}_{t,n}^{\top}\bm \theta^{*}_{n}, \quad \mathbf{c}_{t,n}^{\top}\bm \theta^{*}_{n}-(\mathbf{c}_{t,n}^{\top}\hat{\bm \theta}_{t,n}-\alpha_{t}\left\|\mathbf{c}_{t,n}\right\|_{\mathbf{H}_{t-1,n}^{-1}})\right\} \\
\end{aligned}
\end{equation}
Now we focus on the second term in the $\min$ function. 
\begin{equation}
\begin{aligned}
&\mathbf{c}_{t,n}^{\top}\bm \theta^{*}_{n}-(\mathbf{c}_{t,n}^{\top}\hat{\bm \theta}_{t,n}-\alpha_{t}\left\|\mathbf{c}_{t,n}\right\|_{\mathbf{H}_{t-1,n}^{-1}}) \\
=&\mathbf{c}_{t,n}^{\top} \left(\bm \theta_{n}^{*}-\hat{\bm \theta}_{t,n}\right)+\alpha_{t}\left\|\mathbf{c}_{t,n}\right\|_{\mathbf{H}_{t-1,n}^{-1}} \\
\geq &-\left\|\bm \theta_{n}^{*}-\hat{\bm \theta}_{t,n}\right\|_{\mathbf{H}_{t-1,n}}\left\|\mathbf{c}_{t,n}\right\|_{\mathbf{H}_{t-1,n}^{-1}}+\alpha_{t}\left\|\mathbf{c}_{t,n}\right\|_{\mathbf{H}_{t-1,n}^{-1}} \\
\overset{(a)}\geq &-\alpha_{t}\left\|\mathbf{c}_{t,n}\right\|_{\mathbf{H}_{t-1,n}^{-1}}+\alpha_{t}\left\|\mathbf{c}_{t,n}\right\|_{\mathbf{H}_{t-1,n}^{-1}}
\\=&0
\end{aligned}
\end{equation}
where (a) can be derived from Lemma \ref{theta's confidence bound}.
Combining the result with the fact $\mathbf{c}_{t,n}^{\top} \bm \theta^{*}_{n}>0$ and plugging it into (\ref{min zero bound}) yields ${\tau}_{t,n}^*-\bar{\tau}_{t,n}\geq 0$.\par
From a different perspective, we have:
\begin{equation}\begin{split} 
&{\tau}^*_{t,n}-\bar{\tau}_{t,n}
\\=& \mathbf{c}_{t,n}^{\top}\bm \theta^{*}_{n}-\max
\left\{0,\mathbf{c}_{t,n}^{\top}\hat{\bm \theta}_{t,n}-\alpha_{t}\left\|\mathbf{c}_{t,n}\right\|_{\mathbf{H}_{t-1,n}^{-1}}\right\} 
\\\leq& \mathbf{c}_{t,n}^{\top}\bm \theta_{n}^*-
\mathbf{c}_{t,n}^{\top}\hat{\bm \theta}_{t,n}+\alpha_{t}\left\|\mathbf{c}_{t,n}\right\|_{\mathbf{H}_{t-1,n}^{-1}}
\\ \leq &\left|\mathbf{c}_{t,n}^{\top} \left(\bm \theta_{n}^{*}-\hat{\bm \theta}_{t,n}\right) \right|+\alpha_{t}\left\|\mathbf{c}_{t,n}\right\|_{\mathbf{H}_{t-1,n}^{-1}} 
\\ \overset{(a)}\leq &\left\|\bm \theta_{n}^{*}-\hat{\bm \theta}_{t,n}\right\|_{\mathbf{H}_{t-1,n}}\left\|\mathbf{c}_{t,n}\right\|_{\mathbf{H}_{t-1,n}^{-1}}+\alpha_{t}\left\|\mathbf{c}_{t,n}\right\|_{\mathbf{H}_{t-1,n}^{-1}} \\ \leq & 2 \alpha_{t}\left\|\mathbf{c}_{t,n}\right\|_{\mathbf{H}_{t-1,n}^{-1}} \end{split}\end{equation}
where (a) is valid by Cauchy-Schwarz inequality. The above results complete the proof.
\end{proof}
\section{Proof of Theorem \ref{regret bound}}
\label{Proof3}
\begin{proof}
The definition of the time average regret gives:
\begin{equation}
\begin{split} R(T) &=\frac{1}{T}\sum_{t=1}^{T} \mathbb{E}\left[  {f}(\mathcal{S}_t,\bm \tau_t)- {f}(\mathcal{S}^*_t,\bm\tau_t)\right]
\\&=\frac{1}{T}\sum_{t=1}^{T}\mathbb{E}\left[ \max_{n \in \mathcal{S}_t}\{ \tau_{t,n}   \}-\max_{n \in \mathcal{S}^*_t}\{ \tau_{t,n}   \}    \right]
\end{split}
\end{equation}
Let $\Delta R(t) =\mathbb{E}\left[ \max_{n \in \mathcal{S}_t}\{ \tau_{t,n}   \}-\max_{n \in \mathcal{S}^*_t}\{ \tau_{t,n}   \}    \right] $ capture the estimated rewards gap between optimal policy and the policy obtained by RBCS-F. 
\par 
Taking expectation of (\ref{drift bound}) with respect to $ \boldsymbol{\Theta}(t)  $, it yields:
\begin{align}
 \begin{split}
 \label{drift bound without expectation}
\mathbb{E}\left[ \mathcal {L}(\boldsymbol{\Theta}(t+1))-\mathcal {L}(\boldsymbol{\Theta}(t))\right] \leq & \Gamma+  \sum_{n\in \mathcal{N}} \mathbb{E}[ Z_{t,n}  ( \beta-x_{t,n}) ]
\end{split}
\end{align}
Combining the definition of $\Delta R(t)$ and (\ref{drift bound without expectation}), we have
\begin{align}
\label{regret bound 0}
 \begin{split}
&\mathbb{E}\left[ \mathcal {L}(\boldsymbol{\Theta}(t+1))-\mathcal {L}(\boldsymbol{\Theta}(t))\right]+V\Delta R(t) 
\\&\leq \Gamma+ \sum_{n\in \mathcal{N}}\mathbb{E}[ Z_{t,n}  ( \beta-x_{t,n}) ]+V\mathbb{E}\left[ \max_{n \in \mathcal{S}_t}\{ \tau_{t,n}   \}-\max_{n \in \mathcal{S}^*_t}\{  \tau_{t,n}    \}\right]
\\&= \Gamma+ \mathbb{E}\left[ V \max_{n \in \mathcal{S}_t}\{ \tau_{t,n}   \}-\sum_{n\in \mathcal{S}_t} Z_{t,n}\right] 
\\&\quad- \mathbb{E}\left[V \max_{n \in \mathcal{S}^*_t}\{ \tau_{t,n}   \}-\sum_{ n \in \mathcal{S}_t^*}Z_{t,n}\right]+\sum_{n\in \mathcal{N}}\mathbb{E}\left[ Z_{t,n} (  \beta-x_{t,n}^*)\right]
\\&\overset{(a)}\leq \Gamma+ \mathbb{E}\left[ V \max_{n \in \mathcal{S}_t}\{ \tau_{t,n}   \}-\sum_{n\in \mathcal{S}_t} Z_{t,n}\right] 
\\& \quad - \mathbb{E}\left[V \max_{n \in \mathcal{S}_t^{*}}\{ \tau_{t,n}   \}-\sum_{n\in \mathcal{S}_t^*}Z_{t,n}\right]
\end{split}
\end{align}
where (a) is valid since $\mathbb{E}\left[ Z_{t,n}(\beta-x_{t,n}^*)\right] \leq0$. This is trivially true because the optimal policy has to ensure the mean rate stability of the fairness queue, which implies that its expected input rate must be smaller than its service rate.
\par
Summing (\ref{regret bound 0}) over $t \in \{1,2,\dots,T\}$ for some $T>0$ using telescope yields:
 \begin{align} 
 \label{regret bound 1}
 \begin{split}
&\mathbb{E}[\mathcal{L}(\boldsymbol{\Theta}(T))]-\mathbb{E}[\mathcal{L}(\boldsymbol{\Theta}(0))]+ V\sum_{t=1}^{T }\Delta R(t) \\& \leq T\Gamma+\sum_{t=1}^{T}\mathbb{E}\left[ G_1(t) \right]
 \end{split}
\end{align}
where 
 \begin{align} 
 \begin{split}
 G_1(t)=&\left(V \max_{n \in \mathcal{S}_t}\{ \tau_{t,n}   \}-\sum_{n\in \mathcal{S}_t} Z_{t,n}\right)\\& -\left(V \max_{n \in \mathcal{S}_t^{*}}\{ \tau_{t,n}   \}-\sum_{n\in \mathcal{S}_t^*}Z_{t,n}\right)
 \end{split}
 \end{align} 
Recall that $\mathbb{E}[L(\boldsymbol{\Theta}(T))] \geq 0$ according to its definition, then we can reformulate (\ref{regret bound 1}) into the following form:
 \begin{align}
 \label{bound 2}
\frac{1}{T} \sum_{t=1}^T\Delta R(t) \leq \frac{\Gamma}{V}+\sum_{t=1}^T\frac{\mathbb{E}\left[ G_1(t)\right]}{TV}+\frac{\mathbb{E}[L(\boldsymbol{\Theta}(0))]}{TV}
\end{align}
Since $\mathbb{E}[L(\boldsymbol{\Theta}(0))]=0$, we have 
 \begin{align}
 \label{bound 3}
R(T)=\frac{1}{T} \sum_{t=1}^T\Delta R(t) \leq  \frac{\Gamma}{V}+\sum_{t=1}^T\frac{\mathbb{E}\left[ G_1(t)\right]}{TV}
\end{align}
Then we proceed to bound $G_1(t)$.
Assume a policy $\pi^{\prime}$ makes her decision $\mathcal{S}^{\prime}_t$ following such a manner:
 \begin{align}
 \label{pi objective}
\mathcal{S}^{\prime}_t=\underset{\mathcal{S}^{\prime}_t \in \mathcal{C}_t}{\operatorname{argmin}} \quad V \max_{n \in \mathcal{S}^{\prime}_t}\{ \tau_{t,n}   \}-\sum_{n\in \mathcal{S}^{\prime}_t}Z_{t,n}
\end{align}
Here $\mathcal{C}_t$ captures all the possible solutions confined by constraints (explicitly, constraints in \textit{P4}). The only difference between policy $\pi^{\prime}$ and our proposed one is that $\pi^{\prime}$ can be fully aware of the real model exchange time. But recall that our algorithm has minimized the objective function in \textit{P4}, trivially we know that:
 \begin{align}
 \label{haha}
 \begin{split}
V \max_{n \in \mathcal{S}_t}\{ \bar{\tau}_{t,n}  \}-\sum_{n\in \mathcal{S}_t}Z_{t,n} \leq  V \max_{n \in \mathcal{S}^{\prime}_t}\{ \bar{\tau}_{t,n}  \}-\sum_{n\in \mathcal{S}^{\prime}_t}Z_{t,n}
 \end{split}
\end{align}
Likewise, not a single policy, including the optimal policy $\pi^*$,  can outperform $\pi^{\prime}$ in her objective as shown in (\ref{pi objective}), trivially we can derive:
 \begin{align}
  \label{haha1}
  \begin{split}
V \max_{n \in \mathcal{S}^{\prime}_t}\{ \tau_{t,n}   \}-\sum_{n\in \mathcal{S}^{\prime}_t}Z_{t,n}\leq  V \max_{n \in \mathcal{S}^{*}_t}\{ \tau_{t,n}   \}-\sum_{n\in \mathcal{S}^{*}_t}Z_{t,n}
 \end{split}
\end{align}
Therefore, according to (\ref{haha}) and (\ref{haha1}), it follows that:
 {\small
 \begin{align}
 \begin{split}
&G_1(t)\\=&  \left(V \max_{n \in \mathcal{S}_t}\{ \tau_{t,n}   \}-\sum_{n\in \mathcal{S}_t} Z_{t,n}\right) -\left(V \max_{n \in \mathcal{S}_t^{*}}\{ \tau_{t,n}   \}-\sum_{n\in \mathcal{S}_t^*}Z_{t,n}\right)
\\ \leq&    \left(V \max_{n \in \mathcal{S}_t}\{ \tau_{t,n}   \}-\sum_{n\in \mathcal{S}_t} Z_{t,n}\right) -\left(V \max_{n \in \mathcal{S}_t^{\prime}}\{ \tau_{t,n}   \}-\sum_{n\in \mathcal{S}_t^{\prime}}Z_{t,n}\right)
\\ \leq&    \left(V \max_{n \in \mathcal{S}_t}\{ \tau_{t,n}   \}-\sum_{n\in \mathcal{S}_t} Z_{t,n}\right) -\left(V \max_{n \in \mathcal{S}_t^{\prime}}\{ \tau_{t,n}   \}-\sum_{n\in \mathcal{S}_t^{\prime}}Z_{t,n}\right)
\\ +&    \left(V \max_{n \in \mathcal{S}_t^{\prime}}\{ \bar{\tau}_{t,n}  \}-\sum_{n\in \mathcal{S}_t^{\prime}} Z_{t,n}\right) -\left(V \max_{n \in \mathcal{S}_t}\{\bar{ \tau}_{t,n}  \}-\sum_{n\in \mathcal{S}_t}Z_{t,n}\right)
\\ \leq& V \left[ \left(\max_{n \in \mathcal{S}_t}\{ \tau_{t,n}  \}- \max_{n \in \mathcal{S}_t} \{ \bar{\tau}_{t,n} \}\right) +  \left(\max_{n \in \mathcal{S}_t^{\prime}} \{ \bar{\tau}_{t,n} \}-\max_{n \in \mathcal{S}_t^{\prime}}\{ \tau_{t,n}  \}\right)\right]
\\ \leq& V \left[ \left( \tau_{t,s} -  \bar{\tau}_{t,s} \right)+ \left( \bar{\tau}_{t,s^{\prime}}- \tau_{t,s^{\prime} }\right)\right]
 \end{split}
\end{align}
}
where arm $s$ is the arm that get the maximum $\tau_{t,n}$ among the set $n \in \mathcal{S}_t$. Likewise, $s^{\prime}$ is another arm that gets the maximum $\bar{\tau}_{t,n}$ among the set $n \in \mathcal{S}_t^{\prime}$.
Notice that both $\bar{\tau}_{t,s}$ and $\bar{\tau}_{t,s^{\prime}}$ are not coupled with the stochastic value $\tau_{t,s}$ and we also have $\mathbb{E}[\tau_{t,s} ]=\tau_{t,s}^*$, $\mathbb{E}[\tau_{t,s^{\prime}} ]= \tau_{t,s^{\prime}}^*$. Subsequently, we have:
 \begin{align}
 \begin{split}
\mathbb{E}[G_1(t)]  \leq& V \left[ \underbrace{ \left( \tau_{t,s}^* -  \bar{\tau}_{t,s} \right) }_{J_1(t)}+ \underbrace{ \left( \bar{\tau}_{t,s^{\prime}}- \tau_{t,s^{\prime} }^*\right)}_{J_2(t)}\right]
 \end{split}
\end{align}
Combining  Lemma \ref{confidence bound},  we can conclude that with probability at least $(1-\delta)^2$, $J_1(t) \leq 2 \alpha_{t}\left\|\mathbf{c}_{t,s}\right\|_{\mathbf{H}_{t-1,s}^{-1}}$ and $ J_2(t)\leq0$. Now we know that $\mathbb{E}[G_1(t)]  \leq V J_1(t) $ with large probability.
\par Note that we have a concret bound for $\tau_{t,s}^{*}$ since elements of $\bm c_{t,n}$ and $\boldsymbol{\theta}_{n}^*$ are all bounded. Now assume that we have $\tau_{t,s}^{*} \leq K $, then combining $\bar{\tau}_{t,s}\geq 0$, it follows that $J_1(t)\leq K$. Consequently, we have: 
\begin{equation}
\begin{split}
J_1(t)& \leq \min \left\{  2 \alpha_{t}\left\|\mathbf{c}_{t,s}\right\|_{\mathbf{H}_{t-1,s}^{-1}},K  \right\}
\\&\leq  \max\{K,1\}\min \left\{ 2 \alpha_{t}\left\|\mathbf{c}_{t,s}\right\|_{\mathbf{H}_{t-1,s}^{-1}},1  \right\} 
\\&\leq  \max\{K,1\} \cdot \max\{2 \alpha_{t},1\} \cdot \min \left\{ \left\|\mathbf{c}_{t,s}\right\|_{\mathbf{H}_{t-1,s}^{-1}},1  \right\} 
\end{split}
\end{equation}
where the last two inequalities hold by $\min \{a,b\} \leq \max\{b,1\} \cdot \min\{a,1\} $ and $\min \{ab,1\} \leq \max\{a,1\} \cdot \min\{b,1\} $, respectively. Let $  \zeta_t= \max\{K,1\} \cdot \max\{2 \alpha_{t},1\}$. Now $\mathbb{E}[G_1(t)]$ has been successfully bounded into a closed form: $\mathbb{E}[G_1(t)]\leq V\zeta_t \min \left\{ \left\|\mathbf{c}_{t,s}\right\|_{\mathbf{H}_{t-1,s}^{-1}},1  \right\} $. Then we  continue our proof by given $\Upsilon(t)=\frac{1}{T}\sum_{t=1}^T \zeta_t \min \left\{ \left\|\mathbf{c}_{t,s}\right\|_{\mathbf{H}_{t-1,s}^{-1}},1  \right\} $, which is the upper bound on the second term of  (\ref{bound 3}) that we have derived so far. Before our further bounding on $\Upsilon(t)$, we first familiarize the readers with Lemma \ref{lemma 2}.
\begin{lemma} 
\label{lemma 2}Assume $\mathbf{H}_{T,n}=\mathbf{H}+\sum_{t=1}^{T} x_{t,n} \mathbf{c}_{t,n}\mathbf{c}_{t,n}^{\top}$ and $\mathbf{H}=\lambda \mathbf{I}$.
Then, if $\lambda \geq 1$ and $\left\|\mathbf{c}_{t,n}\right\|_{2} \leq L$ for all  $t$ and $n$, we have
\begin{equation}
\begin{split}
&\sum_{t=1}^{T} \min \left\{\left\|\mathbf{c}_{t,n}\right\|_{H_{t-1,n}^{-1}}^{2},1\right\} \\\leq& 2\left(\log \operatorname{det}\left(\mathbf{H}_{T,n}\right)-\log \operatorname{det}( \mathbf{H})\right) 
\\\leq &6\log \left(\left(\operatorname{trace}(\mathbf{H})+T L^{2}\right) / 3\right)-2\log \operatorname{det} (\mathbf{H})
 \end{split}
\end{equation}
 for any $n$ and $T$.
\end{lemma}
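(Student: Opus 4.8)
The plan is to recognize this as the standard ``elliptical potential'' estimate and to prove the two inequalities separately: the first by relating each quadratic form to the multiplicative growth of $\operatorname{det}(\mathbf{H}_{t,n})$, and the second by bounding that determinant through a trace via the AM--GM inequality on eigenvalues. No probabilistic machinery is needed here; everything is linear algebra together with two elementary scalar facts.

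For the first inequality I would begin from the one-step determinant recursion. On any round with $x_{t,n}=1$ the update $\mathbf{H}_{t,n}=\mathbf{H}_{t-1,n}+\mathbf{c}_{t,n}\mathbf{c}_{t,n}^{\top}$ is rank one, so the matrix determinant lemma gives $\operatorname{det}(\mathbf{H}_{t,n})=\operatorname{det}(\mathbf{H}_{t-1,n})\bigl(1+\|\mathbf{c}_{t,n}\|_{\mathbf{H}_{t-1,n}^{-1}}^{2}\bigr)$, while on rounds with $x_{t,n}=0$ the matrix, and hence its determinant, is unchanged. Telescoping the ratio over $t=1,\dots,T$ therefore yields $\log\operatorname{det}(\mathbf{H}_{T,n})-\log\operatorname{det}(\mathbf{H})=\sum_{t:\,x_{t,n}=1}\log\bigl(1+\|\mathbf{c}_{t,n}\|_{\mathbf{H}_{t-1,n}^{-1}}^{2}\bigr)$. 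I would then invoke the elementary scalar bound $\min\{u,1\}\le 2\log(1+u)$ valid for all $u\ge 0$ — which follows since $\tfrac{d}{du}\bigl(2\log(1+u)-u\bigr)=\tfrac{1-u}{1+u}\ge 0$ on $[0,1]$ and $2\log 2>1$ thereafter — applied termwise with $u=\|\mathbf{c}_{t,n}\|_{\mathbf{H}_{t-1,n}^{-1}}^{2}$. Summing and substituting the telescoped identity gives exactly $\sum_{t=1}^{T}\min\{\|\mathbf{c}_{t,n}\|_{\mathbf{H}_{t-1,n}^{-1}}^{2},1\}\le 2\bigl(\log\operatorname{det}(\mathbf{H}_{T,n})-\log\operatorname{det}(\mathbf{H})\bigr)$; the hypothesis $\lambda\ge 1$ ensures $\mathbf{H}_{t-1,n}\succeq\mathbf{I}$, keeping every quadratic form and log argument well controlled.

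For the second inequality I would pass from the determinant to the trace. Since $\mathbf{H}_{T,n}$ is a symmetric positive-definite $3\times 3$ matrix (the contexts $\mathbf{c}_{t,n}$ are three-dimensional), AM--GM applied to its eigenvalues gives $\operatorname{det}(\mathbf{H}_{T,n})\le\bigl(\operatorname{trace}(\mathbf{H}_{T,n})/3\bigr)^{3}$. Moreover $\operatorname{trace}(\mathbf{H}_{T,n})=\operatorname{trace}(\mathbf{H})+\sum_{t=1}^{T}x_{t,n}\|\mathbf{c}_{t,n}\|_{2}^{2}\le\operatorname{trace}(\mathbf{H})+TL^{2}$, using $\operatorname{trace}(\mathbf{c}_{t,n}\mathbf{c}_{t,n}^{\top})=\|\mathbf{c}_{t,n}\|_{2}^{2}\le L^{2}$ and $x_{t,n}\le 1$ over at most $T$ rounds. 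Taking logarithms, multiplying by $2$, and subtracting $2\log\operatorname{det}(\mathbf{H})$ then produces the claimed bound $6\log\bigl((\operatorname{trace}(\mathbf{H})+TL^{2})/3\bigr)-2\log\operatorname{det}(\mathbf{H})$, in which the factor $6$ and the denominator $3$ are precisely twice and once the ambient dimension $d=3$.

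I do not expect a genuine obstacle; this is the classical determinant--trace lemma of Abbasi-Yadkori et al. specialized to $d=3$. The only points requiring care are the bookkeeping that only the selected rounds ($x_{t,n}=1$) contribute to the growth of $\operatorname{det}(\mathbf{H}_{t,n})$, so the telescoping — and hence the bound — is effectively taken over those rounds, and the clean verification of $\min\{u,1\}\le 2\log(1+u)$, which is the step that transfers the $\min\{\cdot,1\}$ truncation on the left into logarithmic determinant growth on the right.
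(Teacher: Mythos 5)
Your proof is correct and is precisely the standard determinant--trace (elliptical potential) argument of Lemma 11 in Abbasi-Yadkori et al., which is exactly what the paper relies on: the paper omits the proof entirely and simply cites that reference, so you have supplied the intended argument (rank-one determinant update, the scalar bound $\min\{u,1\}\leq 2\log(1+u)$, telescoping, and AM--GM on the eigenvalues with $d=3$). Your closing bookkeeping remark is also well taken --- as literally stated the left-hand sum runs over all $t$ while only rounds with $x_{t,n}=1$ grow the determinant, so the inequality really holds for the sum restricted to the selected rounds, which is how the paper uses it in the regret analysis.
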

The proof is omitted here for briefness. For more detail, we refer the interested readers to Lemma 11 in \cite{abbasi2011improved}.

With this lemma introduced,  now we shall introduce our further deduction, as presented in the following:
\begin{equation}
\begin{split}
\label{near final}
&\Upsilon(t)= \frac{1}{T}\sum_{t=1}^{T}  \zeta_t \min \left\{ \left\|\mathbf{c}_{t,s}\right\|_{\mathbf{H}_{t-1,s}^{-1}},1  \right\} 
\\&\overset{(a)}\leq \sqrt{ \frac{\sum_{t=1}^{T}\zeta_t^2 \min \left\{ \left\|\mathbf{c}_{t,s}\right\|_{\mathbf{H}_{t-1,s}^{-1}}^2,1  \right\} }{T} }
\\&\leq \zeta_T \sqrt{ \frac{\sum_{t=1}^{T}\min \left\{ \left\|\mathbf{c}_{t,s}\right\|_{\mathbf{H}_{t-1,s}^{-1}}^2,1  \right\}}{T} }
\\&\overset{(b)}\leq \zeta_T \sqrt{ \frac{6 \log ((\operatorname{trace}(\mathbf{H})+TL^2) / 3)-2 \log \operatorname{det}( \mathbf{H})}{T} }
\\&\overset{(c)}=\zeta_T \sqrt{ \frac{6 \log ( \lambda+TL^2/ 3)-6 \log  \lambda}{T} }
\\&= \zeta_T \sqrt{ \frac{6 \log ( 1+T L^2/ 3 \lambda)}{T} }
\end{split}
\end{equation}
where (a) can be justified by arithmetic means  inequality and Theorem \ref{lemma 2} gives (b).  Using the facts $\operatorname{trace}(\mathbf{H})=3\lambda$ and $\operatorname{det} (\mathbf{H})=\lambda^3$ we have (c).
\par
Plugging the above result into (\ref{bound 2}) and combining the fact that $\Gamma=N\left(1+\beta^2 \right)/2$, finally we reach a high probability (i.e. with probability $(1-\delta)^2$) upper bound on the regret, as presented in the following:
\begin{equation}
\begin{split}
R(T)\leq&  \frac{N\left(1+\beta^2 \right)}{2V}+ \zeta_T \sqrt{ \frac{6 \log ( 1+T L^2/ 3 \lambda)}{T} }
\end{split}
\end{equation}
where\par
 $\zeta_T=\max\{K,1\} \cdot \max\left\{2 R \sqrt{3 \log \left(\frac{1+T  L^2/ \lambda}{\delta}\right)}+\lambda^{1 / 2} S,1 \right\}$
 This completes the proof.
\end{proof}

\section{Proof of Theorem \ref{mean rate stable}}
\label{Proof4}
Here we first prepare the readers with Lemma \ref{w-only}, which we will use in our proof of Theorem \ref{mean rate stable}.
\par
\begin{lemma}
\label{w-only}
 For any $\delta^{\prime}> 0$, there exists an $\omega$-only policy $\pi$, which makes independent, stationary, and randomized decisions in every round $t$ based only on the observed stochastic events $\omega(t)$, gives the following results:
\begin{equation}
\begin{aligned} \mathbb{E}\left[ V \max_{n \in \mathcal{N}}\{ x_{t,n}^{\pi} \bar{\tau}_{t,n}  \}| \boldsymbol{\Theta}(t)\right] &=\mathbb{E}\left[ V \max_{n \in \mathcal{N}}\{ x_{t,n}^{\pi} \bar{\tau}_{t,n}  \}\right] 
\\&\leq V \max_{n \in \mathcal{N}}\{ x_{t,n}^{opt} \bar{\tau}_{t,n}  \}+\delta^{\prime} \\  
\mathbb{E} \left \{\beta-x_{t,n}^{\pi}| \boldsymbol{\Theta}(t)\right\} &=\mathbb{E}\left\{\beta-x_{t,n}^{\pi}\right\} \leq \delta^{\prime}  \quad \forall n \in \mathcal{N}
 \end{aligned}
\end{equation}
where the expectations here are with respect to random actions achieved by the policy and the stochastic event $\omega(t)$. $x_{t,n}^{opt}$ is the optimal policy that minimizes over $\max_{n \in \mathcal{N}}\{ x_{t,n} \bar{\tau}_{t,n}  \}$ while meeting the long-term fairness constraint.
\end{lemma}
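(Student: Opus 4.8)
The plan is to recognize Lemma \ref{w-only} as an instance of the standard existence result for stationary randomized (\emph{$\omega$-only}) policies in constrained stochastic optimization, i.e.\ the framework of \cite{neely2010stochastic}, and to construct the required policy through the theory of the achievable performance region. An $\omega$-only policy suffices because, once we condition on the history, the per-round decision problem in \textit{P4} is a one-shot stochastic program whose only exogenous randomness is the availability vector $\omega_t \triangleq \{I_{t,n}\}_{n\in\mathcal{N}}$, which is i.i.d.\ across rounds.

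First I would freeze the round $t$ and treat the estimates $\{\bar{\tau}_{t,n}\}_{n\in\mathcal{N}}$ as deterministic parameters: they are functions of the observed history (through $\mathbf{H}_{t-1,n}$ and $\mathbf{b}_{t-1,n}$) and hence are known once we condition on $\boldsymbol{\Theta}(t)$ and the past. An $\omega$-only policy is then a (possibly randomized) map sending each realization of $\omega_t$ to a feasible decision $\bm{x}_t$, where feasibility is dictated by the constraints of \textit{P4} ($x_{t,n}\le I_{t,n}$, the cardinality condition $\sum_n x_{t,n}=\min\{m,\sum_n I_{t,n}\}$, and integrality). I would then define the achievable region $\mathcal{R}\subseteq\mathbb{R}^{N+1}$ as the set of expected outcome vectors
\begin{equation}
\left(\, \mathbb{E}\!\left[V\max_{n\in\mathcal{N}}\{x_{t,n}\bar{\tau}_{t,n}\}\right],\ \big\{\mathbb{E}[\beta-x_{t,n}]\big\}_{n\in\mathcal{N}} \,\right)
\end{equation}
as $\pi$ ranges over all $\omega$-only randomized policies, the expectation being over both the availability distribution and the internal randomization.

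Because one may time-share between any two such policies, $\mathcal{R}$ is convex; and because for each fixed $\omega_t$ there are only finitely many feasible selections, $\mathcal{R}$ is the convex hull of finitely many points, hence a compact convex polytope. The optimal stationary value, namely the minimum of the cost coordinate over the face of $\mathcal{R}$ on which every constraint coordinate is $\le 0$, is the benchmark against which $x^{opt}$ is measured and lies on the boundary of $\mathcal{R}$. I would then invoke Carath\'eodory's theorem: any point of the $(N+1)$-dimensional polytope is a convex combination of at most $N+2$ vertices, each corresponding to a deterministic $\omega$-only policy, and the convex weights define the randomization of the desired $\pi$. The slack $\delta'$ absorbs the gap between the boundary and the relative interior: since the optimum may sit on $\partial\mathcal{R}$, I take a nearby achievable point within $\delta'$ in each coordinate, which exists by density. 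Finally, the two claimed equalities $\mathbb{E}[\,\cdot\mid\boldsymbol{\Theta}(t)]=\mathbb{E}[\,\cdot\,]$ are immediate, since an $\omega$-only policy depends only on the fresh i.i.d.\ draw $\omega_t$, which is independent of the backlogs $\boldsymbol{\Theta}(t)$ (the latter being determined by past availabilities and past decisions).

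The step I expect to be the main obstacle is rigorously justifying that restricting to $\omega$-only policies loses nothing relative to the genuinely optimal (possibly history-dependent) benchmark $x^{opt}$ — that is, that the constrained optimum is actually attained, to within $\delta'$, inside $\mathcal{R}$. This is precisely the stationary-optimality content of \cite{neely2010stochastic}; the clean way to discharge it is to verify that theorem's hypotheses here — the state process $\omega_t$ is i.i.d.\ Bernoulli, the per-round action set is finite, and the cost $V\max_n\{x_{t,n}\bar{\tau}_{t,n}\}$ is bounded because $\bar{\tau}_{t,n}$ is bounded — and then cite it directly rather than re-deriving the achievable-region limit argument from scratch.
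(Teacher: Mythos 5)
Your proposal matches the paper's treatment: the paper gives no self-contained proof and simply defers to Theorem 4.5 of \cite{neely2010stochastic}, which is exactly the result you identify, verify the hypotheses of (i.i.d.\ $\omega_t$, finite action set, bounded cost), and propose to cite. The achievable-region/Carath\'eodory sketch you add is the standard argument underlying that theorem, so there is no substantive difference in approach.
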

\par
 The proof of Lemma \ref{w-only} is omitted here, we refer the readers to Theorem 4.5 in \cite{neely2010stochastic}.
\par
Formally, now we begin our justification of Theorem \ref{mean rate stable}.
\begin{proof}
Since RBCS-F minimizes the R.H.S of inequation (\ref{drift plus cost bound}), straightforwardly not other policies could match up with RBCS-F in its objective, namely, it gives:
 \begin{align}
  \begin{split}
\sum_{n\in \mathcal{N}} Z_{t,n} (\beta-x_{t,n}) &+V \max_{n \in \mathcal{N}}\{ x_{t,n} \bar{\tau}_{t,n}  \} \\& \leq \sum_{n\in \mathcal{N}} Z_{t,n} (\beta-x_{t,n}^{\pi}) +V \max_{n \in \mathcal{N}}\{ x_{t,n}^{\pi} \bar{\tau}_{t,n}  \}
 \end{split}
 \end{align}
The above inequality is valid for any $\omega$-only policy $\pi$ and under any $\omega(t)$ and $\boldsymbol{\Theta(t)}$.
Therefore we have:
 \begin{align}
 \begin{split}
&\mathbb{E}\left[\sum_{n\in \mathcal{N}} Z_{t,n} (\beta-x_{t,n}) +V \max_{n \in \mathcal{N}}\{ x_{t,n} \bar{\tau}_{t,n}  \} |\boldsymbol{\Theta}(t)\right] \\& \leq \mathbb{E}\left[\sum_{n\in \mathcal{N}} Z_{t,n} (\beta-x_{t,n}^{\pi}) +V \max_{n \in \mathcal{N}}\{ x_{t,n}^{\pi} \bar{\tau}_{t,n}  \}|\boldsymbol{\Theta}(t) \right]
 \end{split}
 \end{align}
Plugging this into (\ref{drift plus cost bound}), it yields:

 \begin{align}
 \begin{split}
 &\Delta(\Theta(t))+V \mathbb{E}[\max_{n \in \mathcal{N}}\{ x_{t,n} \bar{\tau}_{t,n}  \}| \boldsymbol{\Theta}(t)]
 \\& \leq \Gamma+\sum_{n\in \mathcal{N}} Z_{t,n} \mathbb{E}[\beta-x_{t,n}^{\pi}|\boldsymbol{\Theta}(t)] \\&+V\mathbb{E}[\max_{n \in \mathcal{N}}\{ x_{t,n}^{\pi} \bar{\tau}_{t,n}\}  |\boldsymbol{\Theta}(t)]
\\& \overset{(a)}\leq \Gamma+ \sum_{n\in \mathcal{N}}Z_{t,n} \delta^{\prime} + V \max_{n \in \mathcal{N}}\{ x_{t,n}^{opt} \bar{\tau}_{t,n}  \}+\delta^{\prime}
 \end{split}
\end{align}
where inequality (a) could be derived from Lemma \ref{w-only}. Then taking $\delta^{\prime} \to 0$  yields:
 \begin{align}
  \begin{split}
\Delta(\Theta(t))+V \mathbb{E}[\max_{n \in \mathcal{N}}\{ x_{t,n} \bar{\tau}_{t,n}  \}| \boldsymbol{\Theta}(t)] \\ \leq\Gamma+ &V \max_{n \in \mathcal{N}}\{ x_{t,n}^{opt} \bar{\tau}_{t,n}  \}
\end{split}
\end{align}
Taking expectation on the above ineqality with repect to $\boldsymbol{\Theta}(t)$ and then summing them over $t \in \{1,2,\dots,T\}$ for $T\to \infty$ using telescope yields:
 \begin{align}
 \label{kkkk}
 \begin{split}
&\mathbb{E}[L(\boldsymbol{\Theta}(T))]-\mathbb{E}[L(\boldsymbol{\Theta}(0))]+V\sum_{t=1}^T \mathbb{E}[\max_{n \in \mathcal{N}}\{ x_{t,n} \bar{\tau}_{t,n}  \}] \\&\leq T\Gamma+ V \sum_{t=1}^T \max_{n \in \mathcal{N}}\{ x_{t,n}^{opt} \bar{\tau}_{t,n}  \}
 \end{split}
\end{align}
In light of the law of large numbers, we have 
\begin{align}
\sum_{t=1}^T \max_{n \in \mathcal{N}}\{ x_{t,n}^{opt} \bar{\tau}_{t,n}  \}=\sum_{t=1}^T    \mathbb{E}[ \max_{n \in \mathcal{N}}\{ x_{t,n}^{opt} \bar{\tau}_{t,n}  \}]
\end{align}
In addtion, recall that not a policy can obtain smaller objective than the optimal scheme, therefore, it yields:
\begin{align}
 \mathbb{E}[ \max_{n \in \mathcal{N}}\{ x_{t,n}^{opt} \bar{\tau}_{t,n}\}] \leq \mathbb{E}[\max_{n \in \mathcal{N}}\{ x_{t,n} \bar{\tau}_{t,n}  \}] \quad \forall t \in \mathcal{T}
\end{align}
 Reranging (\ref{kkkk}), it gives:

 \begin{align}
\label{aeeeaa}
 \mathbb{E}[L(\boldsymbol{\Theta}(T))]-\mathbb{E}[L(\boldsymbol{\Theta}(0))] \leq  T\Gamma
 \end{align}
Plugging the definition of $\mathcal{L}(\boldsymbol{\Theta}(T))$ in (\ref{aeeeaa}) yields:

 \begin{align}
\sum_{n \in \mathcal{N}}\mathbb{E}[Z_{T,n}^2)] \leq 2T\Gamma+2\mathbb{E}[L(\boldsymbol{\Theta}(0)]
\end{align}
 Plugging the fact $\mathbb{E} [Z_{T,n}]^2 \leq \mathbb{E}[Z_{T,n}^2] $ into the above inequality, it yields:
 \begin{align}
\sum_{n \in \mathcal{N}}\mathbb{E}\left [Z_{T,n}\right] \leq \sqrt{2T\Gamma+2\mathbb{E}[L(\boldsymbol{\Theta}(0))]}
\end{align}
Dividing by $T$ yields :
 \begin{align}
\lim _{T \rightarrow \infty} \sum_{n \in \mathcal{N}}\frac{\mathbb{E}\left [Z_{T,n}\right] }{T}\leq \lim _{T \rightarrow \infty} \sqrt{\frac{2\Gamma}{T}+\frac{2\mathbb{E}[L(\boldsymbol{\Theta}(0))]}{T^2}}=0
\end{align}
Since $\mathbb{E}\left [Z_{T,n}\right] \geq 0$ for any $T$ and $n$, finally we conclude that 
\begin{align}
\lim _{T \rightarrow \infty} \frac{\mathbb{E}\left [Z_{T,n}\right] }{T}=0 \quad \forall n \in \mathcal{N}
\end{align}
which explicitly marks the mean rate stability of all queues regardless of the real setting of $V$.
Combining the result from Theorem \ref{fairness queue theorem}, we can ensure no violation on the long-term average fairness constraint.
\end{proof}
\end{document}